\def\eqref#1{Eq.~(\ref{#1})}
\def\1{\bm{1}}
\DeclareMathAlphabet{\mathsfit}{\encodingdefault}{\sfdefault}{m}{sl}
\SetMathAlphabet{\mathsfit}{bold}{\encodingdefault}{\sfdefault}{bx}{n}
\def\gA{{\mathcal{A}}}
\def\gC{{\mathcal{C}}}
\def\gG{{\mathcal{G}}}
\def\gL{{\mathcal{L}}}
\def\gT{{\mathcal{T}}}
\def\gV{{\mathcal{V}}}
\DeclareMathOperator*{\argmin}{arg\,min}
\theoremstyle{plain}
\newtheorem{corollary}{Corollary}
\theoremstyle{definition}
\newtheorem{definition}{Definition}
\newtheorem{assumption}{Assumption}
\theoremstyle{remark}
\newtheorem{remark}{Remark}
\definecolor{lb}{rgb}{0.78, 0.88, 0.95}
\definecolor{lg}{gray}{0.9}
\icmltitlerunning{One Stone, Two Birds: Enhancing Adversarial Defense Through the Lens of Distributional Discrepancy}
\begin{document}

\twocolumn[
\icmltitle{One Stone, Two Birds: Enhancing Adversarial Defense \\ Through the Lens of Distributional Discrepancy}



\icmlsetsymbol{equal}{*}

\begin{icmlauthorlist}
\icmlauthor{Jiacheng Zhang}{melb}
\icmlauthor{Benjamin I. P. Rubinstein}{melb}
\icmlauthor{Jingfeng Zhang}{auck,riken}
\icmlauthor{Feng Liu}{melb,riken}
\end{icmlauthorlist}

\icmlaffiliation{melb}{School of Computing and Information Systems, The University of Melbourne}
\icmlaffiliation{auck}{The University of Auckland}

\icmlaffiliation{riken}{RIKEN Center for Advanced Intelligence Project (AIP)}

\icmlcorrespondingauthor{Feng Liu}{fengliu.ml@gmail.com}

\icmlkeywords{Machine Learning, ICML}

\vskip 0.3in
]



\printAffiliationsAndNotice{}  

\begin{abstract}
  \emph{Statistical adversarial data detection} (SADD) detects whether an upcoming batch contains \emph{adversarial examples} (AEs) by measuring the distributional discrepancies between \emph{clean examples} (CEs) and AEs. 
  In this paper, we explore the strength of SADD-based methods by theoretically showing that minimizing distributional discrepancy can help reduce the expected loss on AEs.
  Despite these advantages, SADD-based methods have a potential limitation: they discard inputs that are detected as AEs, leading to the loss of useful information within those inputs.
  To address this limitation, we propose a two-pronged adversarial defense method, named \emph{\textbf{D}istributional-discrepancy-based \textbf{A}dversarial \textbf{D}efense} (DAD). 
  In the training phase, DAD first optimizes the test power of the \emph{maximum mean discrepancy} (MMD) to derive MMD-OPT, which is \emph{a stone that kills two birds}. 
  MMD-OPT first serves as a \emph{guiding signal} to minimize the distributional discrepancy between CEs and AEs to train a denoiser. Then, it serves as a \emph{discriminator} to differentiate CEs and AEs during inference.
  Overall, in the inference stage, DAD consists of a two-pronged process: (1) directly feeding the detected CEs into the classifier, and (2) removing noise from the detected AEs by the distributional-discrepancy-based denoiser.
  Extensive experiments show that DAD outperforms current \emph{state-of-the-art} (SOTA) defense methods by \emph{simultaneously} improving clean and robust accuracy on CIFAR-10 and ImageNet-1K against adaptive white-box attacks.
  Codes are publicly available at: \url{https://github.com/tmlr-group/DAD}.
\end{abstract}

\section{Introduction}
\label{S: intro}
The discovery of \emph{adversarial examples} (AEs) has raised a security concern for deep learning techniques in recent decades \citep{DBLP:journals/corr/SzegedyZSBEGF13, DBLP:journals/corr/GoodfellowSS14}. 
AEs are often crafted by adding imperceptible noise to \emph{clean examples} (CEs), which can easily mislead a well-trained deep learning model to make wrong predictions. 
Considering the extensive use of deep learning-based systems, AEs pose a significant security threat for real-world applications \citep{ 
DBLP:conf/cvpr/DongSWLL0019,
healthcare,
DBLP:conf/sp/CaoWXYFYCLL21}. Therefore, it is imperative to develop advanced defense methods to defend against AEs \citep{DBLP:journals/corr/GoodfellowSS14, pgd, DBLP:conf/icml/ZhangYJXGJ19, DBLP:conf/iclr/0001ZY0MG20,
DBLP:conf/icml/YoonHL21,  
diffpure}. 

Recently, \emph{statistical adversarial data detection} (SADD) has gained increasing attention due to its effectiveness in detecting AEs \citep{sammd, epsad}. 
Unlike other detection-based methods that train a detector for specific classifiers \citep{DBLP:conf/icml/Stutz0S20, DBLP:conf/cvpr/DengYX0021, DBLP:conf/cvpr/PangZHD000L22}, SADD leverages statistical methods, for example, \emph{maximum mean discrepancy} (MMD) \citep{DBLP:journals/jmlr/GrettonBRSS12}, to measure the discrepancies between clean and adversarial distributions.
Given the fact that clean and adversarial data are from different distributions, SADD-based methods have been shown empirically effective against adversarial attacks \citep{sammd, epsad}.

To understand the intrinsic strength of SADD-based methods from a theoretical standpoint, we establish a relationship between distributional discrepancy and the expected loss on adversarial data. Our theoretical analysis demonstrates that minimizing distributional discrepancy can help reduce the expected loss on adversarial data, revealing the potential value of leveraging distributional discrepancy to design more effective defense methods (see Section \ref{sec: problem setting} and \ref{sec: theo}).

However, despite their effectiveness from both empirical and theoretical perspectives, detection-based methods (e.g., SADD-based methods) have a potential limitation: they discard inputs if they are detected as AEs, leading to the loss of, e.g., semantic information within those inputs. 
This issue is more prominent in SADD-based methods, where inputs are often processed in batches, potentially resulting in the unintended loss of some CEs along with AEs if a batch contains a mixture of CEs and AEs \citep{sammd, epsad}.
Furthermore, in many domains, obtaining large quantities of high-quality data is challenging due to factors such as cost, privacy concerns, or the rarity of specific data, for example, obtaining medical images for rare diseases is challenging \citep{litjens2017a}. 
As a result, all possible samples with useful information are critical in these data-scarce domains \citep{gandhar2024fraud}. 
Therefore, given the effectiveness of SADD-based methods, the above-mentioned challenges naturally lead us to pose the following question: \emph{Can we design an adversarial defense method that leverages the effectiveness of SADD-based methods, while at the same time, preserves all the data before feeding them into a classifier?}

The answer to this question is \emph{affirmative}. 
Motivated by our theoretical analysis, we propose a two-pronged adversarial defense called \emph{\textbf{D}istributional-discrepancy-based \textbf{A}dversarial \textbf{D}efense} (DAD).
Specifically, we leverage an advanced MMD statistic (named MMD-OPT) in our pipeline, which is obtained by maximizing the testing power of MMD (see Algorithm \ref{alg: mmd-opt}). 
MMD-OPT, \emph{as one stone}, essentially \emph{kills two birds}, i.e., serving two roles in DAD: in the training phase of the denoiser (see Algorithm~\ref{alg: dad}), it acts as a \emph{guiding signal} to minimize the distributional discrepancy between AEs and CEs. 
Then, by simultaneously minimizing the cross-entropy loss, we aim to train a denoiser that can minimize the distributional discrepancy towards the direction of making the classification correct.
In the inference phase (see Section~\ref{sec: inference of dad}), MMD-OPT acts as a \emph{discriminator} to distinguish between CEs and AEs. 
Then, our method applies a two-pronged process: (1) directly feeding the detected CEs into the classifier, and (2) removing noise from the detected AEs by the well-trained denoiser. We provide a visual illustration in Figure \ref{fig: pipeline}.

The success of DAD in adversarial classification takes root in the following aspects:
\begin{itemize}[leftmargin=10pt,nosep]
    \item \textbf{Statistical principle.} Minimizing distributional discrepancies has been proven significant in controlling the expected loss on AEs. Thus, our new defense is built upon a solid theoretical foundation. 
    \item \textbf{One stone, two birds.} DAD combines the strengths of SADD-based and denoiser-based methods while addressing their limitations: SADD-based methods will discard the useful information within AEs. In contrast, denoiser-based methods cannot distinguish AEs and CEs beforehand, which often results in a drop in clean accuracy. Our method, on the other hand, separates CEs and AEs in the inference phase, thereby keeping the accuracy for CEs nearly unaffected. At the same time, AEs can be properly handled by the denoiser.
    \item \textbf{Discrimination is easier than data generation.} Compared to the most successful denoiser-based methods (known as adversarial purification) that rely on density estimation (e.g., \citet{diffpure} and \citet{lee}), learning distributional discrepancies between AEs and CEs is a simpler and more feasible task, especially on large-scale datasets such as ImageNet-1K (Section \ref{sec: adaptive white-box attack}).
\end{itemize}

Through extensive evaluations on benchmark image datasets such as CIFAR-10 \citep{cifar} and Imagenet-1K \citep{deng2009imagenet}, we demonstrate the effectiveness of DAD in Section \ref{sec: experiment}. 
Compared to current \emph{state-of-the-art} (SOTA) adversarial defense frameworks (i.e., adversarial training and adversarial purification), DAD can notably improve clean and robust accuracy \emph{simultaneously} against well-designed adaptive white-box attacks (see Section \ref{sec: adaptive white-box attack}). Furthermore, experiments show that DAD can generalize well against unseen transfer attacks (see Section \ref{sec: transfer attack}).

\section{Problem Setting and Assumptions}
\label{sec: problem setting}
In this section, we discuss the problem setting for the adversarial classification in detail.
We formalize our problem for $K$-class classification as follows. 
We define a \emph{domain} as a pair consisting of a distribution $\mathcal{D}$ on inputs $\mathcal{X}$ and a labelling function $f : \mathcal{X} \to \{1, ..., K\}$.
Specifically, we consider a clean domain and an adversarial domain. 
The clean domain is denoted by $\langle\mathcal{D_C}, f_\mathcal{C}\rangle$, and
the adversarial domain is denoted by $\langle\mathcal{D_A}, f_\mathcal{A}\rangle$.
We define a \emph{hypothesis} as a function $h: \mathcal{X} \rightarrow \{1, ..., K\}$ from the hypothesis space $\mathcal{H}$. 
The probability according to the distribution $\mathcal{D}$ that a hypothesis $h$ disagrees with a labelling function $f$ (which can also be a hypothesis) is the \emph{risk}:
\begin{equation}
    R(h,f, \mathcal{D}) = \mathbb{E}_{\mathbf{x} \sim \mathcal{D}} \left[\mathcal{L}(h(\mathbf{x}),f(\mathbf{x}))\right], \nonumber
\end{equation}
where $\mathcal{L}(h(\mathbf{x}),f(\mathbf{x}))$ is a loss function that measures the disagreement between $h(\mathbf{x})$ and $f(\mathbf{x})$.
We consider the clean risk of a hypothesis as $R(h, f_\mathcal{C}, \mathcal{D_C})$, and the adversarial risk as $R(h, f_\mathcal{A}, \mathcal{D_A})$.
In our problem, adversarial data are generated based on the given clean data.
Therefore, $\mathcal{D_C}$ is fixed and we use $\mathbb{D}$ to represent a set of valid adversarial distributions such that all possible $\mathcal{D_A} \in \mathbb{D}$.

\begin{assumption}
\label{assumption 1}
For any valid adversarial attack, adversarial data are generated by adding an $\epsilon$-norm-bounded imperceptible perturbation $\epsilon'$ to the given clean data without changing its semantic meaning. Assume a valid \emph{ground-truth} labelling function $f_\mathcal{A}$ exists, $f_\mathcal{A}$ satisfies the following property:
\begin{equation}
    \forall \epsilon'\ \mbox{s.t.}\ \|\epsilon'\|_p\leq \epsilon, \quad f_\mathcal{A}(\mathbf{x} + \epsilon') = f_\mathcal{A}(\mathbf{x}), \nonumber
\end{equation}
where $\epsilon$ is the maximum allowed perturbation budget, and $\|\cdot\|_p$ is the threat model's $\ell_p$ norm.
\end{assumption}

\begin{assumption}
\label{assumption 2}
Attacks in the adversarial domain will not change the labelling from the clean ground truth, i.e., mathematically:
\begin{equation}
    \forall \epsilon'\ \mbox{s.t.}\ \|\epsilon'\|_p\leq \epsilon,\quad f_\mathcal{A}(\mathbf{x} + \epsilon') = f_\mathcal{C}(\mathbf{x}), \nonumber
\end{equation}
where $\epsilon$ is the maximum allowed perturbation budget.
\end{assumption}

\section{Motivation from Theoretical Justification}
\label{sec: theo}
In this section, we study the relationship between adversarial risk and distributional discrepancy, aiming to shed some light on designing effective adversarial defense methods. 

\subsection{Relation between labelling functions} 
We first reveal a simple yet important relation between labelling functions of CEs and AEs.
\begin{corollary}
\label{coro 1}
If Assumptions \ref{assumption 1} and \ref{assumption 2} both hold, then we have:
\begin{equation}
    \forall \mathbf{x} \in \mathcal{X}, \quad f_\mathcal{C}(\mathbf{x}) = f_\mathcal{A}(\mathbf{x}). \nonumber
\end{equation}
\end{corollary}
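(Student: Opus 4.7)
The plan is to observe that the corollary follows almost immediately from Assumption \ref{assumption 2} by specializing to the trivial perturbation. Since the zero vector satisfies $\|\mathbf{0}\|_p = 0 \leq \epsilon$ for any admissible perturbation budget $\epsilon \geq 0$, it is a valid choice of $\epsilon'$ in the statement of Assumption \ref{assumption 2}.

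First, I would fix an arbitrary $\mathbf{x} \in \mathcal{X}$ and instantiate Assumption \ref{assumption 2} with $\epsilon' = \mathbf{0}$. This yields $f_\mathcal{A}(\mathbf{x} + \mathbf{0}) = f_\mathcal{C}(\mathbf{x})$, i.e., $f_\mathcal{A}(\mathbf{x}) = f_\mathcal{C}(\mathbf{x})$. Since $\mathbf{x}$ was arbitrary, the conclusion holds on all of $\mathcal{X}$. Note that Assumption \ref{assumption 1} is not logically needed for this argument; however, it is natural to include it here because the two assumptions together are what make the adversarial setup well-posed, and both are invoked throughout the subsequent theoretical analysis.

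There is essentially no obstacle: the only subtlety is the admissibility of the choice $\epsilon' = \mathbf{0}$, which is immediate from the non-negativity of any $\ell_p$ norm. Consequently, I would present the proof as a single-line verification rather than a multi-step argument, and I would mention in passing that Assumption \ref{assumption 1} then ensures the labelling is further constant over the whole $\epsilon$-ball around $\mathbf{x}$, which is the stronger property actually used downstream.
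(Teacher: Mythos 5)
Your proof is correct: taking $\epsilon' = \mathbf{0}$ (admissible since $\|\mathbf{0}\|_p = 0 \leq \epsilon$) in Assumption \ref{assumption 2} immediately gives $f_\mathcal{A}(\mathbf{x}) = f_\mathcal{C}(\mathbf{x})$ for every $\mathbf{x}$, and the paper itself states the corollary without an explicit proof, treating it as exactly this kind of immediate consequence. Your side remark is also accurate: as stated, Assumption \ref{assumption 1} is logically redundant for the corollary (the phrasing ``both hold'' suggests the intended chain $f_\mathcal{C}(\mathbf{x}) = f_\mathcal{A}(\mathbf{x}+\epsilon') = f_\mathcal{A}(\mathbf{x})$ through both assumptions, but your specialization to the zero perturbation reaches the same conclusion with less).
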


\begin{remark}
Assumptions \ref{assumption 1} and \ref{assumption 2} are more like inherent truths here, as attacks should only generate valid examples that abide by the original label  \citep{Brian2024adversarial}. Therefore, compared to the setting of common domain adaptation problems \citep{DBLP:conf/nips/Ben-DavidBCP06, DBLP:journals/ml/Ben-DavidBCKPV10}, the ground-truth labelling functions for the clean and adversarial domains are equal in our problem setting.
\end{remark}

\subsection{Theoretical justifications} 
For simplicity, we analyze our problem for binary classification, i.e., a labelling function $f$ is simplified to $f : \mathcal{X} \to \{0, 1\}$ and a hypothesis $h \in \mathcal{H}$ is simplified to $h: \mathcal{X} \rightarrow \{0, 1\}$. The loss function is simplified to 0-1 loss (i.e., $\mathcal{L}(h(\mathbf{x}), f(\mathbf{x})) = |h(\mathbf{x}) - f(\mathbf{x})|$). Otherwise, other settings (e.g., the definition of risks) are the same as described in Section \ref{sec: problem setting}.

\begin{definition}
\label{def: l-1 divergence}
    For simplicity, we use $L^1$ divergence, one of the most distinguishable metrics, as a natural measure of distributional discrepancies between two distributions:
    \begin{equation}
        d_1(\mathcal{D}, \mathcal{D'}) = 2 \sup_{B \in \mathcal{B}}|\Pr_\mathcal{D}[B] - \Pr_{\mathcal{D}'}[B]|, \nonumber
    \end{equation}
    where $\mathcal{B}$ is the set of measurable subsets under $\mathcal{D}$ and $\mathcal{D'}$.
\end{definition}

\begin{restatable}{theorem}{thmmain}
\label{theorem 1}
For a hypothesis $h \in \mathcal{H}$ and a distribution $\mathcal{D_A} \in \mathbb{D}$:
\begin{equation}
    R(h, f_\mathcal{A}, \mathcal{D_A}) \leq R(h, f_\mathcal{C}, \mathcal{D_C}) + d_1(\mathcal{D_C}, \mathcal{D_A}). \nonumber
\end{equation}
\end{restatable}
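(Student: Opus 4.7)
The plan is to adapt the classical Ben-David-style domain adaptation bound to our setting, using Corollary \ref{coro 1} to collapse the two labelling functions into one.

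First I would rewrite the adversarial risk in terms of $f_\mathcal{C}$. By Corollary \ref{coro 1}, $f_\mathcal{A}(\mathbf{x}) = f_\mathcal{C}(\mathbf{x})$ for every $\mathbf{x} \in \mathcal{X}$, so
\begin{equation}
R(h, f_\mathcal{A}, \mathcal{D_A}) \;=\; \mathbb{E}_{\mathbf{x}\sim\mathcal{D_A}}\bigl[|h(\mathbf{x}) - f_\mathcal{C}(\mathbf{x})|\bigr]. \nonumber
\end{equation}
Next I would introduce a telescoping step, adding and subtracting the clean risk, to obtain
\begin{equation}
R(h, f_\mathcal{A}, \mathcal{D_A}) \;=\; R(h, f_\mathcal{C}, \mathcal{D_C}) + \bigl(\mathbb{E}_{\mathcal{D_A}}[|h-f_\mathcal{C}|] - \mathbb{E}_{\mathcal{D_C}}[|h-f_\mathcal{C}|]\bigr). \nonumber
\end{equation}
It then suffices to upper-bound the bracketed difference by $d_1(\mathcal{D_C},\mathcal{D_A})$.

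For the bound on the difference, I would exploit the fact that $h$ and $f_\mathcal{C}$ are both $\{0,1\}$-valued, so $|h(\mathbf{x}) - f_\mathcal{C}(\mathbf{x})|$ is the indicator of the measurable disagreement set $B_h := \{\mathbf{x}\in\mathcal{X} : h(\mathbf{x})\neq f_\mathcal{C}(\mathbf{x})\}$. Consequently, the difference of expectations equals $\Pr_{\mathcal{D_A}}[B_h] - \Pr_{\mathcal{D_C}}[B_h]$, which is at most $\sup_{B\in\mathcal{B}} |\Pr_{\mathcal{D_C}}[B]-\Pr_{\mathcal{D_A}}[B]| = \tfrac{1}{2} d_1(\mathcal{D_C}, \mathcal{D_A}) \leq d_1(\mathcal{D_C}, \mathcal{D_A})$ by Definition \ref{def: l-1 divergence}. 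Chaining the two steps yields the stated inequality.

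I do not expect any single step to be a serious obstacle here; the main subtlety is simply recognising that Assumptions \ref{assumption 1}--\ref{assumption 2} (via Corollary \ref{coro 1}) remove the usual ``labelling mismatch'' term $\lambda^* = \min_{h'} R(h', f_\mathcal{C},\mathcal{D_C}) + R(h',f_\mathcal{A},\mathcal{D_A})$ that appears in general domain adaptation bounds, which is what makes the right-hand side clean. The only mild technical point is ensuring that $B_h$ lies in $\mathcal{B}$, which follows from standard measurability of $h$ and $f_\mathcal{C}$ as $\{0,1\}$-valued functions.
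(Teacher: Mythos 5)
Your proof is correct, but it takes a cleaner route than the paper's. The paper's proof decomposes through the hybrid risk $R(h, f_\mathcal{A}, \mathcal{D_C})$, bounds two absolute differences via the triangle inequality, controls the distribution-shift term with an integral of density differences $\int |\phi_\mathcal{C}(\mathbf{x}) - \phi_\mathcal{A}(\mathbf{x})|\,|h(\mathbf{x}) - f_\mathcal{A}(\mathbf{x})|\,d\mathbf{x}$, and only invokes Corollary \ref{coro 1} at the very end to kill the labelling-mismatch term $\mathbb{E}[|f_\mathcal{C}(\mathbf{x}) - f_\mathcal{A}(\mathbf{x})|]$. You instead apply Corollary \ref{coro 1} up front, which collapses the two labelling functions immediately and reduces everything to a single telescoping step plus a bound on $\mathbb{E}_{\mathcal{D_A}}[\mathbf{1}_{B_h}] - \mathbb{E}_{\mathcal{D_C}}[\mathbf{1}_{B_h}]$ over the disagreement set $B_h$. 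Two consequences: (i) your argument works directly with the measurable-set form of Definition \ref{def: l-1 divergence} and so does not implicitly assume that $\mathcal{D_C}$ and $\mathcal{D_A}$ admit densities, unlike the paper's proof; and (ii) because the difference in probability of a single event is at most $\sup_{B}|\Pr_{\mathcal{D_C}}[B] - \Pr_{\mathcal{D_A}}[B]| = \tfrac{1}{2}d_1(\mathcal{D_C},\mathcal{D_A})$, you actually obtain the sharper bound $R(h, f_\mathcal{A}, \mathcal{D_A}) \leq R(h, f_\mathcal{C}, \mathcal{D_C}) + \tfrac{1}{2}d_1(\mathcal{D_C}, \mathcal{D_A})$, whereas the paper's density estimate gives up that factor of two. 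What the paper's version buys in exchange is that its structure mirrors the classical domain-adaptation decomposition, making the comparison with prior bounds (and the role of the vanished constant) more transparent; but as a proof of the stated inequality, yours is both more elementary and slightly stronger.
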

The proof of Theorem \ref{theorem 1} can be found in Appendix \ref{A: theo}. Based on this theorem, we can easily find that distributional discrepancy is very important in adversarial defense. 

\textbf{Significance of distributional discrepancy to adversarial defense.} We first give the definition of the well-trained classifier $h_\mathcal{C}^*$ in the adversarial attack scenarios.
\begin{definition}
\label{def: opt-h}
    The optimal hypothesis that minimizes the clean risk is defined as:
    \begin{equation}
        h_\mathcal{C}^* = \argmin_{h \in \mathcal{H}}R(h, f_\mathcal{C}, \mathcal{D_C}). \nonumber
    \end{equation}
\end{definition}
Normally, because an attacker aims to attack the well-trained classifier on clean data (i.e., ideally the clean risk is minimized), according to Theorem \ref{theorem 1}, we have
\begin{equation}
\label{eq: opt}
    R(h_\mathcal{C}^*, f_\mathcal{A}, \mathcal{D_A}) \leq R(h_\mathcal{C}^*, f_\mathcal{C}, \mathcal{D_C}) + d_1(\mathcal{D_C}, \mathcal{D_A}).
\end{equation}
Since $h_\mathcal{C}^*$, $f_\mathcal{C}$ and $\mathcal{D_C}$ are fixed, $R(h_\mathcal{C}^*, f_\mathcal{C}, \mathcal{D_C})$ is possibly a small constant (according to Definition \ref{def: opt-h}). In our problem, the objective of an attacker can be considered as finding an optimal $\mathcal{D_A} \in \mathbb{D}$ that maximizes  $R(h_\mathcal{C}^*, f_\mathcal{A}, \mathcal{D_A})$. Now, assume we have a detector that leverages the distributional discrepancies to identify AEs. Then, to break the defense, the attacker must generate AEs that could minimize the distributional discrepancies between CEs and AEs (i.e., to mislead the detector to identify AEs as CEs). However, according to \eqref{eq: opt}, reducing the distributional discrepancy $d_1(\mathcal{D_C}, \mathcal{D_A})$ can help reduce adversarial risk $R(h_\mathcal{C}^*, f_\mathcal{A}, \mathcal{D_A})$, which violates the objective of adversarial attacks. This intriguing phenomenon helps explain why SADD-based methods are effective against adaptive attacks in practice and inspires the design of our proposed method in this paper (see Section \ref{sec: method}). 

\textbf{Comparison with previous studies.} Previous studies have attempted to use distributional discrepancy in adversarial defense. For example, at the early stage of \emph{adversarial training} (AT), \citet{DBLP:conf/iclr/SongHWH19} propose to treat adversarial attacks as a domain adaptation problem. However, to the best of our knowledge, the relationship between adversarial risk and distributional discrepancy has not been well investigated yet from a theoretical perspective. In previous domain adaptation literature, the upper bound of the risk on the target domain is always bounded by one extra constant \citep{DBLP:conf/colt/MansourMR09, DBLP:journals/ml/Ben-DavidBCKPV10}, e.g., $R(h_\mathcal{C}^*, f_\mathcal{A}, \mathcal{D_A}) \leq R(h_\mathcal{C}^*, f_\mathcal{C}, \mathcal{D_C}) + d_1(\mathcal{D_C}, \mathcal{D_A}) + C$. This constant $C$ may \emph{prevent} decreasing the risk on the target domain from minimizing the distributional discrepancy between the source domain and the target domain. By contrast, we find that adversarial classification is a \emph{special} domain adaptation problem where the ground truth labelling functions are \emph{equivalent} for both source and target domain. Based on this, we derive an upper bound \emph{without any extra constant}, i.e., distributional discrepancy minimization can directly reduce the expected loss on adversarial domain.

\section{A New Framework: Distributional Discrepancy-based Adversarial Defense}
\label{sec: method}

\begin{figure*}[t]
    \begin{centering}
    \includegraphics[width=\linewidth]{./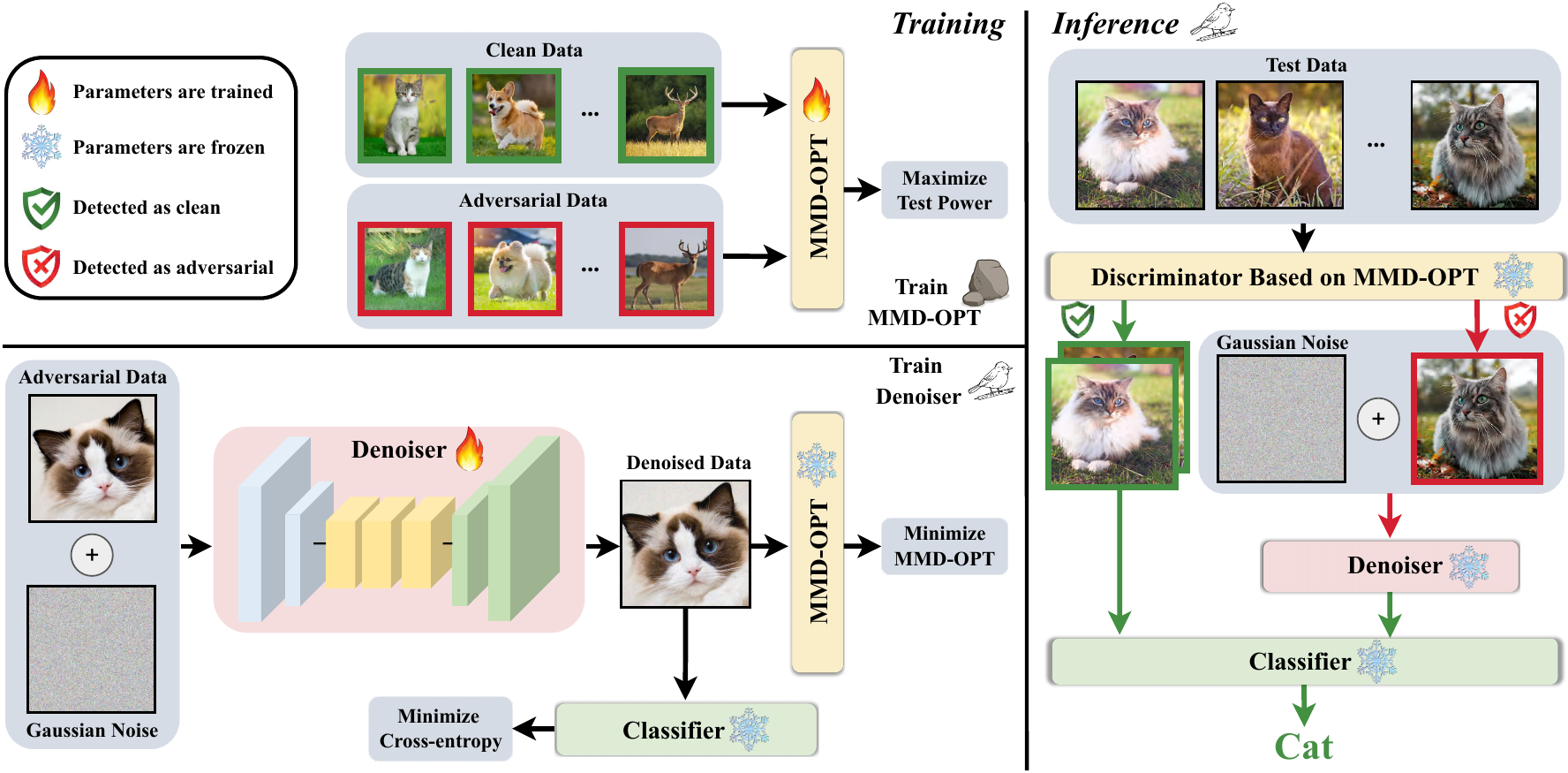}
    \caption{The illustration of \emph{\textbf{D}istributional-discrepancy-based \textbf{A}dversarial \textbf{D}efense} (DAD). DAD first optimizes the test power of the \emph{maximum mean discrepancy} (MMD) to derive MMD-OPT, which is \emph{a stone that kills two birds}. 
  MMD-OPT first serves as a \emph{guiding signal} to minimize the distributional discrepancy between CEs and AEs to train a denoiser. In the inference stage, it also serves as a \emph{discriminator} to differentiate CEs and AEs during inference. Then, our method applies a two-pronged process: (1) directly feeding the detected CEs into the classifier, and (2) removing noise from the detected AEs by the well-trained denoiser.}
    \label{fig: pipeline}
    \end{centering}
\end{figure*} 

Motivated by our theoretical analysis above, we propose a two-pronged adversarial defense framework called \emph{\textbf{D}istributional discrepancy-based \textbf{A}dversarial \textbf{D}efense} (DAD) in this section. We first introduce the concepts of \emph{maximum mean discrepancy} (MMD). This will be followed by a detailed discussion of the training and inference process of DAD which is illustrated in Figure \ref{fig: pipeline}. Detailed description of mathematical notations are in Appendix \ref{A: notation}.

\subsection{Preliminary}
\label{sec: preliminary}
\textbf{Maximum mean discrepancy.} The problem of using $L^1$ divergence in practice is that it does not have unbiased estimators. This is because the supremum can hardly be approximated by finite samples. Hence, in practice, it is challenging to estimate $L^1$ divergence accurately, especially in high-dimensional settings, where the bias and variance of the estimation can become significant. Therefore, in this paper, we use MMD to measure the distributional discrepancies between AEs and CEs. MMD has unbiased estimators and can effectively distinguish the difference between two distributions using small batches of data \citep{liu2020learning, liu2021meta, epsad}. Following \citet{DBLP:journals/jmlr/GrettonBRSS12}, let $\mathcal{X} \subset \mathbb{R}^d$ denote a separable metric space, and let $\mathbb{P}$ and $\mathbb{Q}$ represent Borel probability measures defined on $\mathcal{X}$. Given two sets of IID observations $S_X = \{\mathbf{x}^{(i)}\}_{i=1}^n$ and $S_Z = \{\mathbf{z}^{(i)}\}_{i=1}^m$ sampled from distributions $\mathbb{P}$ and $\mathbb{Q}$, respectively, kernel-based MMD \citep{DBLP:conf/ismb/BorgwardtGRKSS06} quantifies the discrepancy between these two distributions:
\begin{align}
    & \textsc{MMD} (\mathbb{P}, \mathbb{Q}; \mathbb{H}_k) \nonumber = \|\mu_\mathbb{P} - \mu_\mathbb{Q} \|_{\mathbb{H}_k} \\ & = \sqrt{\mathbb{E}[k (X, X')] + \mathbb{E}[k (Z, Z')] - 2\mathbb{E}[k (X, Z)]}, \nonumber
\end{align}
where $k : \mathcal{X} \times \mathcal{X} \to \mathbb{R}$ is the kernel of a reproducing kernel Hilbert space $\mathbb{H}_k$, $\mu_\mathbb{P} := \mathbb{E}[k(\cdot, X)]$ and $\mu_\mathbb{Q} := \mathbb{E}[k(\cdot, Z)]$ are the kernel mean embeddings of $\mathbb{P}$ and $\mathbb{Q}$, respectively. 

For characteristic kernels, $\mu_\mathbb{P} = \mu_\mathbb{Q}$ implies $\mathbb{P} = \mathbb{Q}$, and thus, $\textsc{MMD}(\mathbb{P}, \mathbb{Q}; \mathcal{H}_k) = 0$ if and only if $\mathbb{P} = \mathbb{Q}$. In practice, we use the estimator from a recent work that can effectively measure the discrepancies between AEs and CEs \citep{sammd}, which is defined as:
\begin{equation}
\label{eq: mmd estimator}
    \widehat{\textsc{MMD}}^2_{\rm u}(S_X, S_Z; k_\omega) = \frac{1}{n(n-1)}\sum_{i \neq j}H_{ij},
\end{equation}
where $H_{ij} = k_\omega(\mathbf{x}_i, \mathbf{x}_j) + k_\omega(\mathbf{z}_i, \mathbf{z}_j) - k_\omega(\mathbf{x}_i, \mathbf{z}_j) - k_\omega(\mathbf{z}_i, \mathbf{x}_j)$, and $k_\omega(\mathbf{x}, \mathbf{z})$ is defined as:
\begin{equation}
\label{eq: kernel function}
    k_\omega(\mathbf{x}, \mathbf{z}) = \left[(1 - \beta_0)s_{\widehat{h_\mathcal{C}^*}}(\mathbf{x}, \mathbf{z})+\beta_0\right]q(\mathbf{x}, \mathbf{z}),
\end{equation}
where $\beta_0 \in (0,1)$ and $q(\mathbf{x}, \mathbf{z})$, i.e., the Gaussian kernel with bandwidth $\sigma_q$, are two important components ensuring that $k_\omega(\mathbf{x}, \mathbf{z})$ serves as a characteristic kernel \citep{liu2020learning}. Additionally, $s_{\widehat{h_\mathcal{C}^*}}(\mathbf{x}, \mathbf{z})$ represents a deep kernel function designed to measure the similarity between $\mathbf{x}$ and $\mathbf{z}$ by utilizing semantic features extracted via the second last layer in $\widehat{h_\mathcal{C}^*}$ (i.e., a well-trained classifier on CEs). In practice, $s_{\widehat{h_\mathcal{C}^*}}(\mathbf{x}, \mathbf{z})$ is a well-trained feature extractor (e.g., a classifier without the last layer). 

\begin{algorithm}[t]
	\begin{algorithmic}[1]
    \footnotesize
		\STATE {\bfseries Input:} clean data $S^{\text{train}}_\gC$, adversarial data $S^{\text{train}}_\gA$, learning rate $\eta$, epoch $T$;
            \STATE Initialize $\omega \leftarrow \omega_0$; $\lambda \leftarrow 10^{-8}$;
		\FOR{epoch $ = 1,...,T$}
		      \STATE $S'_\gC \leftarrow$ minibatch from $S^{\text{train}}_\gC$;
		      \STATE $S'_\gA \leftarrow$ minibatch from $S^{\text{train}}_\gA$;
                \STATE $k_\omega \leftarrow$ kernel function with parameters $\omega$ using \eqref{eq: kernel function};
                \STATE $M(\omega) \leftarrow \widehat{\textsc{MMD}}^2_{\rm u}(S'_\gC, S'_\gA; k_\omega)$ using \eqref{eq: mmd estimator};
                \STATE $V_\lambda(\omega) \leftarrow \hat{\sigma}_{\lambda}(S'_\gC, S'_\gA; k_\omega)$ using \eqref{eq: regularized estimator};
                \STATE $\hat{J}_\lambda(\omega) \leftarrow M(\omega) / \sqrt{V_\lambda(\omega)}$ using \eqref{eq: mmd-objective};
                \STATE $\omega \leftarrow \omega + \eta \nabla_{\text{Adam}}\hat{J}_\lambda(\omega)$;
            \ENDFOR
            \STATE {\bfseries Output:} $k^*_\omega$
	\end{algorithmic}
	\caption{Optimizing MMD \citep{liu2020learning}.}
	\label{alg: mmd-opt}
\end{algorithm}

\subsection{Training Process of DAD}
\label{sec: training of ddad}
In this section, we discuss the training process of DAD, which includes optimizing MMD and training the denoiser. We provide detailed algorithmic descriptions for the training process of DAD in Algorithms \ref{alg: mmd-opt} and \ref{alg: dad}.

\textbf{One stone: optimized MMD.} Following \citet{liu2020learning}, the test power of MMD can be maximized by maximizing the following objective (i.e., optimize $k_\omega$):
\begin{equation}
    J(\mathbb{P}, \mathbb{Q}; k_\omega) = \textsc{MMD}^2(\mathbb{P}, \mathbb{Q}; k_\omega)/\sigma(\mathbb{P}, \mathbb{Q}; k_\omega), \nonumber
\end{equation}
$\sigma(\mathbb{P}, \mathbb{Q}; k_\omega) := \sqrt{4(\mathbb{E}[H_{12}H_{13} ]- \mathbb{E}[H_{12}]^2)}$ and $H_{12}, H_{13}$ refer to the $H_{ij}$ in Section \ref{sec: preliminary}. However, $J(\mathbb{P}, \mathbb{Q}; k_\omega)$ cannot be directly optimized because $\textsc{MMD}^2(\mathbb{P}, \mathbb{Q}; k_\omega)$ and $\sigma(\mathbb{P}, \mathbb{Q}; k_\omega)$ depend on $\mathbb{P}$ and $\mathbb{Q}$ that are unknown. Therefore, instead, we can optimize an estimator of  $J(\mathbb{P}, \mathbb{Q}; k_\omega)$:
\begin{align}
\label{eq: mmd-objective}
    & \hat{J}_\lambda(S_\gC, S_\gA; k_\omega) \nonumber \\ & := \widehat{\textsc{MMD}}^2_{\rm u}(S_\gC, S_\gA; k_\omega)/\hat{\sigma}_{\lambda}(S_\gC, S_\gA; k_\omega).
\end{align}
Here $S_\gC$ are clean samples, $S_\gA$ are adversarial samples, $\hat{\sigma}^2_{\lambda}$ is a regularized estimator of $\sigma^2$:
\begin{align}
\label{eq: regularized estimator}
    \frac{4}{n^3}\sum^n_{i=1}\left(\sum^n_{j=1}H_{ij}\right)^2 - \frac{4}{n^4}\left(\sum^n_{i=1}\sum^n_{j=1}H_{ij}\right)^2 + \lambda,
\end{align}
where $\lambda$ is a small constant to avoid 0 division (here we assume $m=n$ to obtain the asymptotic distribution of the MMD estimator).

We can obtain optimized $k_\omega$ (we denote it as $k^*_\omega$) by maximizing \eqref{eq: mmd-objective} on the training set. Then, we define MMD-OPT as the MMD estimator with the optimized kernel $k^*_\omega$:
\begin{equation}
\label{eq: mmd-opt}
    \text{MMD-OPT}(S'_X, S'_Z) = \widehat{\textsc{MMD}}^2_{\rm u}(S'_X, S'_Z; k^*_\omega),
\end{equation}
where $S'_X$ and $S'_Z$ can be any two batches of samples from either the clean or the adversarial domain. 

\begin{algorithm}[t]
	\begin{algorithmic}[1]
        \footnotesize
		\STATE {\bfseries Input:} clean data-label pairs $(S^{\text{train}}_\mathcal{C}, Y^\text{train}_\mathcal{C})$, optimized characteristic kernel $k^*_\omega$ by Algorithm \ref{alg: mmd-opt}, pre-trained classifier $\widehat{h_\mathcal{C}^*}$, denoiser $g$ with parameters $\bm\theta$, learning rate $\eta$, epoch $T$;
            \STATE Initialize $\mu \leftarrow 0$; $\sigma \leftarrow 0.25$; $\alpha \leftarrow 10^{-2}$;
		\FOR{epoch $ = 1,...,T$}
		      \STATE $(S'_\mathcal{C}, Y'_\mathcal{C}) \leftarrow$ minibatch from $(S^{\text{train}}_\mathcal{C}, Y^\text{train}_\mathcal{C})$;
		      \STATE $S'_\mathcal{A} \leftarrow$ AEs generated from $(S'_\mathcal{C}, Y'_\mathcal{C})$;
                \STATE generate Gaussian noise: $\mathbf{n} \sim \mathbb{N}(\mu, \sigma^2)$;
                \STATE $S'_{\text{noise}} \leftarrow S'_{\mathcal{A}} + \mathbf{n}$;
                \STATE Compute $\text{MMD-OPT}(S'_\mathcal{C}, g_{\bm \theta}(S'_{\text{noise}}))$ by \eqref{eq: mmd-opt};
                \STATE Update $\bm\theta$ by \eqref{eq: loss};
            \ENDFOR
            \STATE {\bfseries Output:} denoiser $g$ with well-trained parameters $\bm \theta^*$
	\end{algorithmic}
	\caption{Training the denoiser.}
	\label{alg: dad}
\end{algorithm}

\textbf{First bird: MMD-OPT-based denoiser.} In this paper, we use DUNET \citep{guided_denoiser} as our denoising model. To train the denoiser, we first randomly generate noise $\mathbf{n}$ from a Gaussian distribution $\mathbb{N}(\mu, \sigma^2)$ and add $\mathbf{n}$ to $S_\mathcal{A}$ that are generated from clean data-label pairs $(S_\mathcal{C}, Y_\mathcal{C})$, resulting in noise-injected AEs:
\begin{equation}
    S_{\text{noise}} = S_{\mathcal{A}} + \mathbf{n}. \nonumber
\end{equation}
The design of injecting Gaussian noise is inspired by previous works showing that applying denoised smoothing to a denoiser-classifier pipeline can provide certified robustness \citep{DBLP:conf/nips/SalmanSYKK20, DBLP:conf/iclr/CarliniTDRSK23}. 
Following \citet{DBLP:journals/corr/abs-2401-16352}, we set $\mu = 0$ and $\sigma = 0.25$ by default. Then, we can obtain denoised samples $S_{\text{denoised}}$ by feeding $S_{\text{noise}}$ to a denoiser $g$ with parameters ${\bm \theta}$:
\begin{equation}
    S_\text{denoised} = g_{\bm \theta}(S_\text{noise}). \nonumber
\end{equation}
Ideally, $S_\text{denoised}$ should perform in the same way as its clean counterpart $S_\mathcal{C}$. To achieve this, motivated by our theoretical analysis in Section \ref{sec: theo}, the optimized parameters $\bm \theta^*$ are obtained by minimizing the distributional discrepancy towards the direction of making the classification correct, i.e., minimize MMD-OPT and the cross-entropy loss $\gL_\text{ce}$ simultaneously:
\begin{align}
\label{eq: loss}
g_{\bm \theta^*} & = \argmin_{\bm \theta} \text{MMD-OPT}(S_\mathcal{C}, g_{\bm \theta}(S_\text{noise})) \nonumber \\ & + \alpha \cdot \gL_\text{ce}(\widehat{h_\mathcal{C}^*}(g_{\bm \theta}(S_\text{noise})), Y_\mathcal{C}),
\end{align}
where $\alpha > 0$ is a regularization term ($10^{-2}$ by default) and $\widehat{h_\mathcal{C}^*}$ is the pre-trained classifier. 

\subsection{Inference Process of DAD}
\label{sec: inference of dad}
In this section, we demonstrate the two-pronged inference process of DAD in detail.

\textbf{Second bird: discriminator based on MMD-OPT.} In the inference phase, we define a batch of clean validation data as $S_\gV$ and the test data as $S_\gT$. In practice, $S_\gV$ is extracted from the training data and is \emph{completely inaccessible} during the training. Then $S_\gV$ serves as a \emph{reference} to measure the distributional discrepancy. According to \eqref{eq: mmd-opt}, the distributional discrepancies between $S_\gV$ and $S_\gT$ can be
\begin{equation}
\label{eq: mmd-opt for inference}
    \text{MMD-OPT}(S_\gV, S_\gT) = \widehat{\textsc{MMD}}^2_{\rm u}(S_\gV, S_\gT; k^*_\omega).
\end{equation}
Then, incorporating a threshold and $\text{MMD-OPT}(S_\gV, S_\gT)$, we can discriminate a batch containing sufficient AEs.

\textbf{The two-pronged inference process.} Based on the denoiser and the discriminator based on MMD-OPT, the two-pronged inference process is demonstrated below.
\begin{itemize}[leftmargin=10pt,nosep]
    \item If MMD-OPT($S_\gV$, $S_\gT$) in \eqref{eq: mmd-opt for inference} is less than a threshold $t$, i.e., MMD-OPT($S_\gV$, $S_\gT$) $< t$, then $S_\gT$ will be treated as CEs and directly fed into the classifier. Then the output will be $\widehat{h^*_\gC}(S_\gT)$, where $\widehat{h^*_\gC}$ is a well-trained classifier;
    \item Otherwise, $S_\gT$ will be treated as AEs and denoised by the well-trained denoiser $g_{\bm{\theta}^*}$. Then, the output will be $\widehat{h^*_\gC}(g_{\bm{\theta}^*}(S_\gT))$.
\end{itemize}

\section{Experiments}
\label{sec: experiment}
We briefly introduce the experiment settings here and provide a more detailed version in Appendix \ref{A: experiment setting}.

\textbf{Dataset and target models.} We evaluate DAD on two benchmark datasets with different scales, i.e., CIFAR-10 \citep{cifar} and ImageNet-1K \citep{deng2009imagenet}. 
For the target models, we mainly use three architectures with different capacities: ResNet \citep{resnet}, WideResNet \citep{wideresnet} and Swin Transformer \citep{swin-t}.

\textbf{Baseline settings.} DAD is a two-pronged adversarial defense method, which is \emph{different} from most existing defense methods. In terms of the pipeline structure, MagNet \citep{magnet} is the only similar defense method to ours, which also contains a two-pronged process. However, MagNet is now considered outdated, making it unfair for DAD to compare with it. Therefore, to make the comparison \emph{as fair as possible}, we follow a recent study on robust evaluation \citep{lee} to compare our method with SOTA \emph{adversarial training} (AT) methods in RobustBench \citep{croce2020robustbench} and \emph{adversarial purification} (AP) methods selected by \citet{lee}.

\begin{algorithm}[t]
\footnotesize
\caption{Adaptive white-box PGD+EOT attack for DAD.}
\label{alg: pgd_eot}
\begin{algorithmic}[1]
\STATE {\bfseries Input:} clean data-label pairs $(S_\mathcal{C}, Y_\mathcal{C})$, optimized characteristic kernel $k^*_\omega$ by Algorithm \ref{alg: mmd-opt}, pre-trained classifier $\widehat{h_\mathcal{C}^*}$, denoiser $g$ with parameters $\bm\theta$, maximum allowed perturbation $\epsilon$, step size $\eta$, PGD iteration $T$, EOT iteration $K$;
\STATE Initialize adversarial data $S_\mathcal{A} \leftarrow S_\mathcal{C}$; $\mu \leftarrow 0$; $\sigma \leftarrow 0.25$; $\alpha \leftarrow 10^{-2}$; $t \leftarrow 0.05$;
\FOR{PGD iteration $1,...,T$}
\STATE Initialize gradients over EOT $\gG_\text{EOT} \leftarrow \mathbf{0};$
\STATE Compute $\text{MMD-OPT}(S_\mathcal{C}, S_\gA)$ by \eqref{eq: mmd-opt};
\FOR{EOT iteration $1,...,K$}
\IF{$\text{MMD-OPT}(S_\mathcal{C}, S_\gA) < t$}
    \STATE $\gG_\text{EOT} \leftarrow \gG_\text{EOT} + \nabla_{S_\gA} (\text{MMD-OPT}(S_\mathcal{C}, S_\gA) + \alpha \cdot \gL_\text{ce}(\widehat{h_\mathcal{C}^*}(S_\gA), Y_\mathcal{C}))$;
\ELSE
    \STATE Generate Gaussian noise: $\mathbf{n} \sim \mathbb{N}(\mu, \sigma^2)$; 
    \STATE $S_{\text{noise}} \leftarrow S_{\mathcal{A}} + \mathbf{n}$;
    \STATE $\gG_\text{EOT} \leftarrow \gG_\text{EOT} + \nabla_{S_\gA} (\text{MMD-OPT}(S_\mathcal{C}, S_\gA) + \alpha \cdot \gL_\text{ce}(\widehat{h_\mathcal{C}^*}(g_{\bm \theta}(S_{\text{noise}})), Y_\mathcal{C}))$;
\ENDIF
\ENDFOR
\STATE $\gG_\text{EOT} \leftarrow \frac{1}{K}\gG_\text{EOT}$;
\STATE Update $S_\gA \leftarrow \Pi_{\mathcal{B}_{\epsilon}(S_\gC)} \left(S_\gA + \eta \cdot \text{sign}(\gG_\text{EOT}) \right)$;
\ENDFOR
\STATE {\bfseries Output:} $S_\gA$
\end{algorithmic}
\end{algorithm}

\textbf{Evaluation settings.} We mainly use PGD+EOT \citep{eot} and AutoAttack \citep{aa} to compare our method with different baseline methods. Specifically, following \citet{lee}, we evaluate AP methods on the PGD+EOT attack with 200 PGD iterations for CIFAR-10 and 20 PGD iterations for ImageNet-1K. We set the EOT iteration to 20 for both datasets. We evaluate AT baseline methods using AutoAttack with 100 update iterations, as AT methods have seen PGD attacks during training, leading to overestimated results when evaluated on PGD+EOT \citep{lee}. 
For our method, we implicitly design an adaptive white-box attack by considering the \emph{entire defense mechanism} of DAD (see Algorithm \ref{alg: pgd_eot}).
To make a fair comparison, we evaluate our method on both adaptive white-box PGD+EOT attack and adaptive white-box AutoAttack with the same configurations mentioned above. 
Notably, we find that our method achieves the \emph{worst-case robust accuracy} on adaptive white-box PGD+EOT attack. 
Therefore, we report the robust accuracy of our method on adaptive white-box PGD+EOT attack for Table \ref{tab: cifar10} and \ref{tab: imagenet}.
On CIFAR-10, the maximum allowed perturbation budget $\epsilon$ for $\ell_\infty$-norm-based attacks and $\ell_2$-norm-based attacks is set to $8/255$ and $0.5$, respectively. 
While on ImageNet-1K, we set $\epsilon = 4/255$ for $\ell_\infty$-norm-based attacks. 

\textbf{Implementation details of DAD.} To avoid the evaluation bias caused by seeing similar attacks beforehand during training, we train both the MMD-OPT and the denoiser using $\ell_\infty$-norm MMA attack \citep{mma}, which \emph{differs significantly} from PGD+EOT and AutoAttack. Then, we use \emph{unseen} attacks to evaluate DAD. For optimizing the MMD, following \citet{sammd}, we set the learning rate to be $2 \times 10^{-4}$ and the epoch number to be 200. For training the denoiser, we set the epoch number to be 60. The initial learning rate is set to $1 \times 10^{-3}$ for both datasets and is divided by 10 at the 45th and 60th epoch to avoid robust overfitting \citep{RiceWK20}. We provide more detailed implementation details in Appendix \ref{A: experiment setting}.

\begin{table*}[t]
    \centering
    \caption{Clean and robust accuracy (\%) against adaptive white-box attacks (\textbf{left}: $\ell_\infty~(\epsilon = 8/255)$, \textbf{right}: $\ell_2~(\epsilon = 0.5)$) on \emph{CIFAR-10}. $^\dagger$ means this method uses WideResNet-34-10. * means this method is trained with extra data. We report averaged results and standard deviations of our method for 5 runs. We show the most successful defense in \textbf{bold}.}
    \vspace{3pt}
    \begin{subtable}{0.48\linewidth}
        \resizebox{\linewidth}{!}{
        \begin{tabular}{cccc}
        \toprule
            \multicolumn{4}{c}{$\ell_\infty~(\epsilon = 8/255)$} \\ 
            \midrule
            Type & Method & Clean & Robust \\ 
            \midrule 
            \multicolumn{4}{c}{WRN-28-10} \\ 
            \midrule
            \multirow{3}{*}{AT} & \citet{DBLP:conf/nips/GowalRWSCM21} & 87.51 & 63.38 \\
            & \citet{DBLP:journals/corr/abs-2010-03593}* & 88.54 & 62.76\\
            & \citet{pang2022robustness} & 88.62 & 61.04\\ \cmidrule{1-4}
            \multirow{3}{*}{AP} & \citet{DBLP:conf/icml/YoonHL21} & 85.66 & 33.48 \\
            & \citet{diffpure} & 90.07 & 46.84\\ 
            & \citet{lee} & 90.16 & 55.82 \\
            \midrule
            \cellcolor{lg}{Ours} & \cellcolor{lg}{DAD} & \cellcolor{lg}{\bf{94.16 $\pm$ 0.08}} & \cellcolor{lg}{\bf{67.53 $\pm$ 1.07}} \\
            \midrule
            \multicolumn{4}{c}{WRN-70-16} \\ 
            \midrule
            \multirow{3}{*}{AT} & \citet{DBLP:journals/corr/abs-2103-01946}* & 92.22 & 66.56\\
            & \citet{DBLP:conf/nips/GowalRWSCM21} & 88.75 & 66.10 \\
            & \citet{DBLP:journals/corr/abs-2010-03593}* & 91.10 & 65.87 \\ 
            \cmidrule{1-4}
            \multirow{3}{*}{AP} & \citet{DBLP:conf/icml/YoonHL21} & 86.76 & 37.11 \\
            & \citet{diffpure} & 90.43 & 51.13 \\ 
            & \citet{lee} & 90.53 & 56.88 \\
            \midrule
            \cellcolor{lg}{Ours} & \cellcolor{lg}{DAD} & \cellcolor{lg}{\bf{93.91 $\pm$ 0.11}} & \cellcolor{lg}{\bf{67.68 $\pm$ 0.87}} \\
            \bottomrule
        \end{tabular}
        }
    \end{subtable}
    \begin{subtable}{0.49\linewidth}
        \resizebox{\linewidth}{!}{
        \begin{tabular}{cccc}
        \toprule
            \multicolumn{4}{c}{$\ell_2~(\epsilon = 0.5)$} \\
            \midrule
            Type & Method & Clean & Robust \\ 
            \midrule
            \multicolumn{4}{c}{WRN-28-10} \\ 
            \midrule
            \multirow{3}{*}{AT} & \citet{DBLP:journals/corr/abs-2103-01946}* & 91.79 & 78.80 \\
            & \citet{DBLP:conf/eccv/AugustinM020}$^\dagger$ & 93.96 & 78.79 \\
            & \citet{sehwag2022robust}$^\dagger$ & 90.93 & 77.24 \\ \cmidrule{1-4}
            \multirow{3}{*}{AP} & \citet{DBLP:conf/icml/YoonHL21} & 85.66 & 73.32 \\
            &  \citet{diffpure} & 91.41 & 79.45 \\
            & \citet{lee} & 90.16 & 83.59 \\
            \midrule
            \cellcolor{lg}{Ours} & \cellcolor{lg}{DAD} & \cellcolor{lg}{\bf{94.16 $\pm$ 0.08}} & \cellcolor{lg}{\bf{84.38 $\pm$ 0.81}} \\
            \midrule
            \multicolumn{4}{c}{WRN-70-16} \\ 
            \midrule
            \multirow{3}{*}{AT} & \citet{DBLP:journals/corr/abs-2103-01946}* & \bf{95.74} & 82.32 \\
            & \citet{DBLP:journals/corr/abs-2010-03593}* & 94.74 & 80.53 \\
            & \citet{DBLP:journals/corr/abs-2103-01946} & 92.41 & 80.42 \\ \cmidrule{1-4}
            \multirow{3}{*}{AP} & \citet{DBLP:conf/icml/YoonHL21} & 86.76 & 75.66 \\
            & \citet{diffpure} & 92.15 & 82.97 \\
            & \citet{lee} & 90.53 & 83.57 \\
            \midrule
            \cellcolor{lg}{Ours} & \cellcolor{lg}{DAD} & \cellcolor{lg}{93.91 $\pm$ 0.11} & \cellcolor{lg}{\bf{84.03 $\pm$ 0.75}} \\
            \bottomrule
        \end{tabular}
        }
    \end{subtable}
    \label{tab: cifar10}
\end{table*}

\begin{table}[t]
    \centering
    \captionof{table}{Clean and robust accuracy (\%) against adaptive white-box attacks $\ell_\infty~(\epsilon=4/255)$ on \emph{ImageNet-1K}. We report averaged results and standard deviations of our method for 3 runs. We show the most successful defense in \textbf{bold}.}
    \label{tab: imagenet}
    \vspace{3pt}
    \resizebox{\linewidth}{!}{
   \begin{tabular}{cccc}
        \toprule
            \multicolumn{4}{c}{$\ell_\infty~(\epsilon = 4/255)$} \\ 
            \midrule
            Type & Method & Clean & Robust \\ 
            \midrule 
            \multicolumn{4}{c}{RN-50} \\ 
            \midrule
            \multirow{3}{*}{AT} & \citet{DBLP:conf/nips/SalmanIEKM20} & 64.02 & 34.96 \\
            & \citet{robustness} & 62.56 & 29.22\\
            & \citet{DBLP:conf/iclr/WongRK20} & 55.62 & 26.24 \\ 
            \midrule
            \multirow{2}{*}{AP} & \citet{diffpure} & 71.48 & 38.71 \\ 
            & \citet{lee} & 70.74 & 42.15 \\
            \midrule
            \cellcolor{lg}{Ours} & \cellcolor{lg}{DAD} & \cellcolor{lg}{\bf{78.61 $\pm$ 0.04}} & \cellcolor{lg}{\bf{53.85 $\pm$ 0.23}} \\
            \bottomrule
        \end{tabular}
        }
\end{table}
\vspace{-8pt}

\subsection{Defending against Adaptive White-box Attacks}
\label{sec: adaptive white-box attack}
\textbf{Result analysis on CIFAR-10.} Table \ref{tab: cifar10} shows the evaluation performance of DAD against adaptive white-box PGD+EOT attack with $\ell_\infty (\epsilon = 8/255)$ and $\ell_2 (\epsilon = 0.5)$ on CIFAR-10. Compared to SOTA defense methods, DAD improves clean and robust accuracy by a notable margin. The evaluation results against BPDA+EOT on CIFAR-10 can be found in Appendix \ref{A: bpda}. 

\textbf{Result analysis on ImageNet-1K.} Table \ref{tab: imagenet} shows the evaluation performance of DAD against adaptive white-box PGD+EOT attack with $\ell_\infty (\epsilon = 4/255)$ on ImageNet-1K. The advantages of our method over baselines become more significant on large-scale datasets. Specifically, compared with AP methods that rely on density estimation \citep{diffpure, lee}, our method improves clean accuracy by at least 7.13\% and robust accuracy by 11.70\% on 
ResNet-50. This empirical evidence supports that identifying distributional discrepancies is a simpler and more feasible task than estimating data density, especially on large-scale datasets such as ImageNet-1K.

\subsection{Defending against Unseen Transfer Attacks}
\label{sec: transfer attack}
Since DAD requires AEs to train the MMD-OPT and the denoiser, it is important for us to evaluate the transferability of our method. We report the transferability of our method (trained on WideResNet-28-10) under different threat models, which include WideResNet-70-16, ResNet-18, ResNet-50 and Swin Transformer in Table \ref{tab: cifar10-transfer}. We use PGD+EOT $\ell_\infty$ and C\&W $\ell_2$ \citep{cw} with 200 iterations for evaluation. Experiment results show that our method generalizes well to unseen transfer attacks.

\begin{table*}[t]
\footnotesize
\caption{Robust accuracy (\%) of DAD trained on WideResNet-28-10 against unseen transfer attacks on \emph{CIFAR-10}. Attackers cannot access parameters of WideResNet-28-10, thereby it is in a \emph{gray-box} setting. We report averaged results and standard deviations of 5 runs.}
\vspace{3pt}
\centering
\begin{tabular}{lccccccc}
\toprule
\multicolumn{6}{c}{\cellcolor{lg}{Trained on WRN-28-10}} \\
\midrule
\multicolumn{2}{c}{Unseen Transfer Attack} & WRN-70-16 & RN-18 & RN-50 & Swin-T \\
\midrule
\multirow{2}{*}{PGD+EOT ($\ell_\infty$)} & $\epsilon = 8/255$ & 80.84 $\pm$ 0.46 & 80.78 $\pm$ 0.60 &  81.47 $\pm$ 0.30 & 81.46 $\pm$ 0.29 \\
& $\epsilon = 12/255$ & 80.26 $\pm$ 0.60 & 80.54 $\pm$ 0.45 & 80.98 $\pm$ 0.36 & 80.40 $\pm$ 0.41\\
\midrule
\multirow{2}{*}{C\&W ($\ell_2$)} & $\epsilon = 0.5$ & 82.45 $\pm$ 0.19 & 91.30 $\pm$ 0.20 &  89.26 $\pm$ 0.11 &  93.45 $\pm$ 0.17 \\
& $\epsilon = 1.0$ & 81.20 $\pm$ 0.39 & 90.37 $\pm$ 0.17 & 88.65 $\pm$ 0.22 & 93.41 $\pm$ 0.18\\
\bottomrule
\end{tabular}
\label{tab: cifar10-transfer}
\end{table*}

\begin{figure}[t]
\centering
\includegraphics[width=\linewidth]{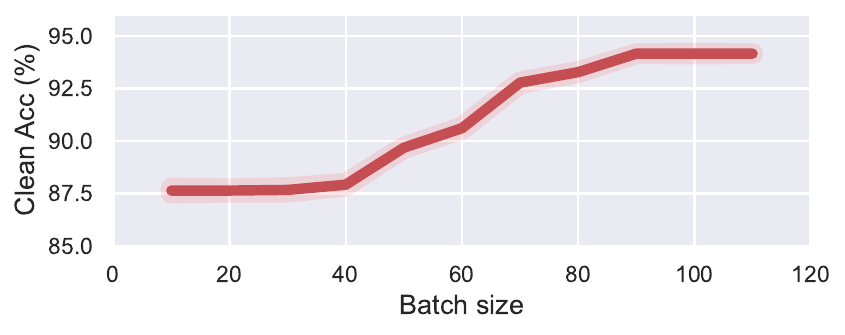}
\includegraphics[width=0.95\linewidth]{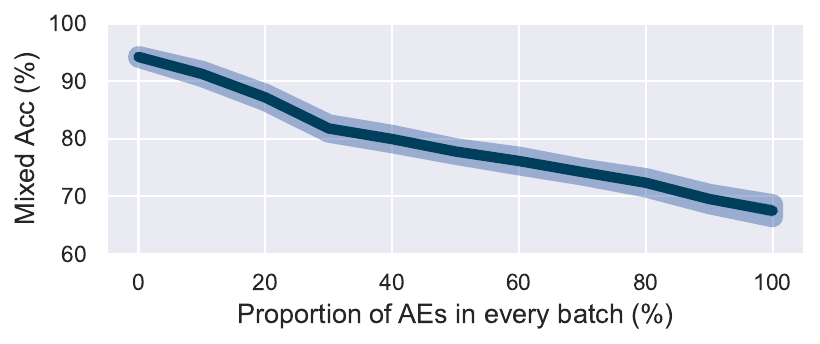}
\captionof{figure}{\textbf{Top}: clean accuracy (\%) vs. batch size; \textbf{Bottom}: mixed accuracy (\%) vs. proportion of AEs in a batch (\%). We plot averaged results and standard deviations of 5 runs.}
\label{fig: ablation}
\end{figure}

\subsection{Ablation Studies}
\textbf{Ablation study on batch size.} Identifying distributional discrepancies requires the data to be processed in batches. 
Therefore, we aim to determine how much data in a batch will not affect the stability of our method. Figure \ref{fig: ablation} (top) shows the clean accuracy of our method on CIFAR-10 with different batch sizes, ranging from 10 to 110. We find that once the batch size exceeds 100, the performance of our method is stable. Therefore, in this paper, we set the test batch size to 100 for all evaluations.

\textbf{Ablation study on mixed data batches.} We explore a more challenging scenario for our method, in which each data batch contains a mixture of CEs and AEs. Figure \ref{fig: ablation} (bottom) shows the mixed accuracy (i.e., the accuracy on mixed data) of our method on CIFAR-10 with different proportions of AEs (generated by adaptive white-box PGD+EOT $\ell_\infty$ with $\epsilon = 8/255$) in each batch, ranging from 0\% (i.e., pure CEs) to 100\% (i.e., pure AEs). Initially, the mixed accuracy drops from over 90\% to approximately 80\%. This is because, with a high proportion of CEs, the MMD-OPT has a high chance to regard the entire batch as clean data. After that (i.e., from 30\% onward), the mixed accuracy degrades gradually to approximately 70\%. This is because, as the proportion of AEs increases, the MMD-OPT regards the entire batch as adversarial and feeds it into the denoiser. Notably, \emph{DAD can still outperform baseline methods} (see Appendix \ref{A: mixed data batch}).

\textbf{Ablation study on threshold values of MMD-OPT.}
We explore the impact of threshold values of MMD-OPT in Appendix \ref{A: threshold}. We select the threshold based on the experimental results on the validation data. Specifically, a threshold value of $0.5$ is selected for CIFAR-10 and $0.02$ is selected for ImageNet-1K. It is reasonable to use a smaller threshold for ImageNet-1K because the distribution of AEs with $\epsilon = 4/255 $ (i.e., AEs for ImageNet-1K) will be closer to CEs than AEs with $\epsilon = 8/255$ (i.e., AEs for CIFAR-10). Intuitively, when $\epsilon$ decreases to 0, AEs are the same as CEs (i.e., the distribution of AEs and CEs will be the same).

\textbf{Ablation study on the regularization term $\alpha$ in \eqref{eq: loss}.} We explore the impact of $\alpha$ in Appendix \ref{A: ablation on alpha}. Notably, when $\alpha$ increases, the robust accuracy will decrease. This is because increasing $\alpha$ in \eqref{eq: loss} reduces the relative influence of the MMD-OPT, thereby diminishing its contribution to robustness. This observation aligns with the theoretical findings presented in Section \ref{sec: theo}.

\textbf{Ablation study on injecting Gaussian noise.} We provide evaluation results of our method against adaptive white-box PGD+EOT attack with and without injecting Gaussian noise on CIFAR-10 in Appendix \ref{A: ablation on gaussian}. We find that injecting Gaussian noise can make DAD generalize better.

\textbf{Ablation study on the two-pronged process.} We provide evaluation results of our method against adaptive white-box PGD+EOT attack with and without MMD-OPT on CIFAR-10 in Appendix \ref{A: ablation on two-pronged process}. We find that using the two-pronged process can largely improve clean accuracy.

\subsection{Compute Resource of DAD}
We report the compute resources used for training and evaluating DAD in Appendix \ref{A: compute resource}. Compared to AT baselines, DAD offers better training efficiency (e.g., it can scale to large datasets like ImageNet-1K). Additionally, although DAD requires training an extra denoiser and MMD-OPT, it significantly outperforms AP baselines in inference speed. Furthermore, relying on a pre-trained generative model is not always feasible, as training such models at scale can be highly resource-intensive. Therefore, in general, \emph{DAD provides a more lightweight design}.

\subsection{Combination with Adversarial Training}
We also combine several well-known AT methods with DAD: Vanilla AT \citep{pgd}, TRADES \citep{DBLP:conf/icml/ZhangYJXGJ19}, MART 
\citep{DBLP:conf/iclr/0001ZY0MG20}, and MART \citep{wu2020adversarial} to see whether our method can be combined with AT-based methods. Following our pipeline, detected CEs are directly classified by the AT-based classifier, while detected AEs are first denoised by our method before being classified by AT-based methods. Experimental results show that combining AT methods with DAD can consistently improve adversarial robustness under different proportions of AEs in each batch (see Appendix \ref{A: AT}).

\section{Related Work}
\label{Sec: related work}
We briefly review the related work here, and a more detailed version can be found in Appendix \ref{Appendix: related work}.

\textbf{Statistical adversarial data detection.} Recently, \emph{statistical adversarial data detection} (SADD) has attracted increasing attention in defending against AEs. 
For example, \citet{sammd} demonstrate that \emph{maximum mean discrepancy} (MMD) is aware of adversarial attacks and leverage the distributional discrepancy between AEs and CEs to filter out AEs, which has been shown effective against unseen attacks.
Based on this, \citet{epsad} further propose expected perturbation score to measure the expected score of a sample after multiple perturbations. 

\textbf{Denoiser-based adversarial defense.} Denoiser-based adversarial defense often leverages generative models to shift AEs back to their clean counterparts before feeding them into a classifier. In most literature, it is called \emph{adversarial purification} (AP). At the early stage of AP, \citet{magnet} propose a two-pronged defense called \emph{MagNet} to remove adversarial noise by first using a detector to \emph{discard the detected AEs}, and then using an autoencoder to purify the remaining samples. The following studies mainly focus on exploring the use of more powerful generative models for AP \citep{guided_denoiser, defense_gan, pixel_defend, DBLP:conf/icml/YoonHL21, diffpure}.
Recently, the outstanding denoising capabilities of pre-trained diffusion models have been leveraged to purify AEs \citep{diffpure, lee}. 
The success of recent AP methods often relies on the assumption that there will be a pre-trained generative model that can precisely estimate the probability density of the CEs \citep{diffpure, lee}. 
However, even powerful generative models (e.g., diffusion models) may have an inaccurate density estimation, leading to unsatisfactory performance \citep{chen2023robust}. 
By contrast, instead of estimating probability densities, our method directly minimizes the distributional discrepancies between AEs and CEs, leveraging the fact that identifying distributional discrepancies is simpler and more feasible.

\section{Discussions on Batch-wise Detections}
\label{S: limitation}
We briefly discuss the limitation, solutions and practicability here, and see Appendix \ref{A: limitaiton} for detailed discussions.

\textbf{Limitation and its solutions for user inference.} DAD leverages statistics based on distributional discrepancies, which requires the data to be processed in batches for adversarial detection. When the batch size is too small, the stability of DAD will be affected (see Figure \ref{fig: ablation}). 
To address this issue, each single sample provided by the user can be dynamically stored in a queue. Once the queue accumulates enough samples to form a batch, our method can then process the batch collectively using the proposed approach. A direct cost of this solution is the waiting time, as the system must accumulate enough samples (e.g., 50 samples) to form a batch before processing. However, in scenarios where data arrives quickly, the waiting time is typically very short, making this approach feasible for many real-time applications. Overall, it is a trade-off problem: using our method for user inference can obtain high robustness, but the cost is to wait for batch processing. Based on the performance improvements our method obtains over the baseline methods, we believe the cost is acceptable.
Another possible solution is to find more robust statistics that can measure distributional discrepancies with fewer samples. We leave finding such statistics as future work.

\textbf{Practicability beyond user inference.} Other than user inference, our method is suitable for cleaning the data before fine-tuning the underlying model. In many domains, obtaining large quantities of high-quality data is challenging due to factors such as cost, privacy concerns, or the rarity of specific data. As a result, all possible samples with clean information are critical in these data-scarce domains. Then, a practical scenario is that there exists a pre-trained model on a large-scale dataset (e.g., a DNN trained on ImageNet-1K) and clients want to fine-tune the model to perform well on downstream tasks. If the data for downstream tasks contain AEs, our method can be applied to batch-wisely clean the data before fine-tuning the underlying model.

\section{Conclusion}
\label{Sec: conclusion}
SADD-based defense methods empirically show that leveraging the distributional discrepancies can effectively defend against adversarial attacks. 
However, a potential limitation of SADD-based methods is that they will discard data batches that contain AEs, leading to the loss of clean information. 
To solve this problem, inspired by our theoretical analysis that minimizing distributional discrepancy can help reduce the expected loss on AEs, we propose a two-pronged adversarial defense called \emph{\textbf{D}istributional-discrepancy-based \textbf{A}dversarial \textbf{D}efense} (DAD) that leverages the effectiveness of SADD-based methods without discarding input data, which \emph{kills two birds with one stone}. 
Extensive experiments demonstrate that DAD effectively defends against various adversarial attacks, \emph{simultaneously} improving both robustness and clean accuracy. In general, we hope this simple yet effective method could open up a new perspective on adversarial defenses based on distributional discrepancies.

\section*{Acknowledgements}
JCZ is supported by the Melbourne Research Scholarship and would like to thank Qiwei Tian and Ruijiang Dong for productive discussions. 
FL is supported by the Australian Research Council (ARC) with grant number DE240101089, LP240100101, DP230101540 and the NSF\&CSIRO Responsible AI program with grant number 2303037. BR is supported by ARC DP220102269. 
This research was supported by The University of Melbourne’s Research Computing Services and the Petascale Campus Initiative.

\section*{Impact Statement}
This study on adversarial defense mechanisms raises important ethical considerations that we have carefully addressed.
We have taken steps to ensure our adversarial defense method is fair.
We use widely accepted public benchmark datasets to ensure comparability of our results.
Our evaluation encompasses a wide range of attack types and strengths to provide a comprehensive assessment of our defense mechanism. 
The proposed defense algorithm contributes to the development of more robust machine learning models, potentially improving the reliability of AI systems in various applications.
We will actively engage with the research community to promote responsible development and use of adversarial defenses.

\bibliography{example_paper}
\bibliographystyle{icml2025}

\newpage
\appendix
\onecolumn
\section{Proof of Theorem \ref{theorem 1}}
\label{A: theo}
\thmmain*
\begin{proof}
Let $\phi_\mathcal{C}$ and $\phi_{A}$ be the density functions of $\mathcal{D_C}$ and $\mathcal{D_A}$:
\begin{align*}
    R(h, f_\mathcal{A}, \mathcal{D_A}) & = R(h, f_\mathcal{A}, \mathcal{D_A}) + R(h, f_\mathcal{C}, \mathcal{D_C}) - R(h, f_\mathcal{C}, \mathcal{D_C}) + R(h, f_\mathcal{A}, \mathcal{D_C}) - R(h, f_\mathcal{A}, \mathcal{D_C})  \\
    & \leq R(h, f_\mathcal{C}, \mathcal{D_C}) + |R(h, f_\mathcal{A}, \mathcal{D_C}) - R(h, f_\mathcal{C}, \mathcal{D_C})| + |R(h, f_\mathcal{A}, \mathcal{D_A}) - R(h, f_\mathcal{A}, \mathcal{D_C})| \\
    & \leq R(h, f_\mathcal{C}, \mathcal{D_C}) + \mathbb{E}\left[|f_{\mathcal{C}}(\mathbf{x}) - f_{\mathcal{A}}(\mathbf{x})|\right] + |R(h, f_\mathcal{A}, \mathcal{D_A}) - R(h, f_\mathcal{A}, \mathcal{D_C})|\\
    &\leq R(h, f_\mathcal{C}, \mathcal{D_C}) + \mathbb{E}\left[|f_{\mathcal{C}}(\mathbf{x}) - f_{\mathcal{A}}(\mathbf{x})|\right] + \int |\phi_\mathcal{C}(\mathbf{x}) - \phi_{\mathcal{A}}(\mathbf{x})||h(\mathbf{x}) - f_\mathcal{A}(\mathbf{x})|d\mathbf{x} \\
    & \overset{(a)}{\leq} R(h, f_\mathcal{C}, \mathcal{D_C}) + \mathbb{E}\left[|f_{\mathcal{C}}(\mathbf{x}) - f_{\mathcal{A}}(\mathbf{x})|\right] + d_1(\mathcal{D_C}, \mathcal{D_A}) \\
    & \overset{(b)}{=} R(h, f_\mathcal{C}, \mathcal{D_C}) + \mathbb{E}\left[|f_{\mathcal{C}}(\mathbf{x}) - f_{\mathcal{C}}(\mathbf{x})|\right] + d_1(\mathcal{D_C}, \mathcal{D_A}) \\
    &= R(h, f_\mathcal{C}, \mathcal{D_C}) + d_1(\mathcal{D_C}, \mathcal{D_A}),
\end{align*}
where $(a)$ is based on Definition \ref{def: l-1 divergence} and $(b)$ is based on Corollary \ref{coro 1}.
\end{proof}

\section{Mathematical Notations in Section \ref{sec: method}}
\label{A: notation}
\bgroup
\def\arraystretch{1.3}

\begin{tabular}{p{1in}p{5in}}
$\displaystyle \mathcal{X}$ & A separable metric space in $\mathbb{R}^d$ \\
$\displaystyle \mathbb{P}, \mathbb{Q}$ & Borel probability measures defined on $\mathcal{X}$ \\
$\displaystyle S_X$ & $n$ IID observations sampled from $\mathbb{P}$, i.e., $\{\mathbf{x}^{(i)}\}_{i=1}^n$ \\
$\displaystyle S_Z$ & $m$ IID observations sampled from $\mathbb{Q}$, i.e., $\{\mathbf{z}^{(i)}\}_{i=1}^m$ \\
$\displaystyle \mathbb{H}_k$ & A reproducing kernel Hilbert space \\
$\displaystyle k_\omega$ & A kernel of $\mathbb{H}_k$ with parameters $\omega$\\
$\displaystyle \mu_{\mathbb{P}}$ & The kernel mean embedding of $\mathbb{P}$ \\
$\displaystyle \mu_{\mathbb{Q}}$ & The kernel mean embedding of $\mathbb{Q}$ \\
$\displaystyle H_{ij}$ & $ k_\omega(\mathbf{x}_i, \mathbf{x}_j) + k_\omega(\mathbf{z}_i, \mathbf{z}_j) - k_\omega(\mathbf{x}_i, \mathbf{z}_j) - k_\omega(\mathbf{z}_i, \mathbf{x}_j)$ \\
$\displaystyle s_{\widehat{h^*_\mathcal{C}}}$ & A deep kernel function that measures the similarity between $\mathbf{x}$ and $\mathbf{z}$ \\
$\displaystyle \widehat{h^*_{\mathcal{C}}}$ & A well-trained classifier \\
$\displaystyle \beta_0$ & A constant $\in (0, 1)$ \\
$\displaystyle q$ & The Gaussian kernel with bandwidth $\sigma_q$ \\
$\displaystyle J$ & The objective function of optimizing MMD \\
$\displaystyle \mu$, $\sigma$ & Mean and standard deviation\\
$\displaystyle \lambda$ & A small constant to avoid 0 division \\
$\displaystyle \mathbf{n}$ & Gaussian noise, i.e., $\mathbf{n} \sim \mathbb{N}(\mu, \sigma^2)$ \\
$\displaystyle g_{\bm \theta}$ & A denoiser with parameters $\bm \theta$ \\
$\displaystyle S_\mathcal{C}$ & Clean samples \\
$\displaystyle Y_\mathcal{C}$ & Ground truth labels of $S_\mathcal{C}$ \\
$\displaystyle S_\mathcal{A}$ & Adversarial examples \\
$\displaystyle S_\mathcal{\text{noise}}$ & Noise-injected adversarial examples \\
$\displaystyle S_\mathcal{\text{denoised}}$ & Denoised samples \\
$\displaystyle \alpha$ & A regularization term \\
\end{tabular}
\egroup
\vspace{0.1cm}

\newpage

\section{Detailed Experiment Settings}
\label{A: experiment setting}
\subsection{Dataset and target models} 
We evaluate the effectiveness of DAD on two benchmark datasets with different scales, i.e., CIFAR-10 \citep{cifar} (small scale) and ImageNet-1K \citep{deng2009imagenet} (large scale). Specifically, CIFAR-10 contains 50,000 training images and 10,000 test images, divided into 10 classes. ImageNet-1K is a large-scale dataset that contains 1,000 classes and consists of 1,281,167 training images, 50,000 validation images, and 100,000 test images. For the target models, we use three widely used architectures with different scales: ResNet \citep{resnet}, WideResNet \citep{wideresnet} and Swin Transformer \citep{swin-t}. Specifically, following \citet{lee}, we use WideResNet-28-10 and WideResNet-70-16 to evaluate the performance of defense methods on CIFAR-10 and we use ResNet-50 to evaluate the performance of defense methods on ImageNet-1K. Additionally, we examine the transferability of our method under different threat models, which include ResNet-18, ResNet-50, WideResNet-70-16 and Swin Transformer.

\subsection{Baseline Settings}
DAD is a two-pronged adversarial defense method, which is different from most existing defense methods. In terms of the pipeline structure, MagNet \citep{magnet} is the only similar defense method to ours, which also contains a two-pronged process. However, MagNet is now considered outdated, making it unfair for DAD to compare with it. Therefore, to make the comparison \emph{as fair as possible}, we follow a recent study on robust evaluation \citep{lee} to compare our method with SOTA \emph{adversarial training} (AT) methods in RobustBench \citep{croce2020robustbench} and \emph{adversarial purification} (AP) methods selected by \citet{lee}.

\subsection{Evaluation Settings} 
We mainly use PGD+EOT \citep{eot} and AutoAttack \citep{aa} to compare our method with different baseline methods. Specifically, following \citet{lee}, we evaluate AP methods on the PGD+EOT attack with 200 PGD iterations for CIFAR-10 and 20 PGD iterations for ImageNet-1K. We set the EOT iteration to 20 for both datasets. We evaluate AT baseline methods using AutoAttack with 100 update iterations, as AT methods have seen PGD attacks during training, leading to overestimated results when evaluated on PGD+EOT \citep{lee}. 
For our method, we implicitly design an adaptive white-box attack by considering the \emph{entire defense mechanism} of DAD.
To make a fair comparison, we evaluate our method on both adaptive white-box PGD+EOT attack and adaptive white-box AutoAttack with the same configurations mentioned above. 
Notably, we find that our method achieves the \emph{worst-case robust accuracy} on adaptive white-box PGD+EOT attack. 
Therefore, we report the robust accuracy of our method on adaptive white-box PGD+EOT attack for Table \ref{tab: cifar10} and \ref{tab: imagenet}.
The algorithmic descriptions of the adaptive white-box attack is provided in Algorithm \ref{alg: pgd_eot}.
On CIFAR-10, $\epsilon$ for $\ell_\infty$-norm-based attacks and $\ell_2$-norm-based attacks is set to $8/255$ and $0.5$, respectively. 
While on ImageNet-1K, we set $\epsilon = 4/255$ for $\ell_\infty$-norm-based attacks. 
We also evaluate our method against BPDA+EOT \citep{DBLP:conf/iclr/HillMZ21} on CIFAR-10. 
For BPDA+EOT, we use the implementation of \citet{DBLP:conf/iclr/HillMZ21} with default hyperparameters for evaluation. 
For transferability experiments, we use PGD+EOT $\ell_\infty$ and C\&W $\ell_2$ \citep{cw} for evaluation. 
Specifically, the iteration number of C\&W $\ell_2$ is set to 200. 
For $\ell_\infty$-norm transfer attacks, we examine the robustness of our method under $\epsilon = 8/255$ and $\epsilon = 12/255$. For C\&W $\ell_2$, we examine our method under $\epsilon = 0.5$ and $\epsilon = 1.0$.

\subsection{Implementation Details of DAD}
To avoid the evaluation bias caused by learning similar attacks beforehand during training, we train both the MMD-OPT and the denoiser using the MMA attack with $\ell_\infty$-norm \citep{mma}, which differs significantly from PGD+EOT and AutoAttack. Then, we use unseen attacks to evaluate DAD. We set $\epsilon = 8/255$ with a step size of $2/255$ for CIFAR-10, and $\epsilon = 4/255$ with a step size of $1/255$ for ImageNet-1K.  For optimizing the MMD, following \citet{sammd}, we set the learning rate to be $2 \times 10^{-4}$ and the epoch number to be 200. For training the denoiser, we set the initial learning rate to $1 \times 10^{-3}$ for both CIFAR-10 and ImageNet-1K. We set the epoch number to be 60 and divide the learning rate by 10 at the 45th epoch and 60th epoch to avoid robust overfitting \citep{RiceWK20}. The training batch size is set to 500 for CIFAR-10 and 128 for ImageNet-1K. The optimizer we use is Adam \citep{DBLP:journals/corr/KingmaB14}. To improve the training efficiency on ImageNet-1K, we randomly select 100 samples from each class, resulting in 100,000 training samples in total. Notably, during the inference time, we evaluate our method using the \emph{entire testing set} for both CIFAR-10 and ImageNet-1K. The batch size for evaluation is set to 100 for all datasets.

\section{Additional Experiments}
\label{A: exp}
\subsection{Defending against BPDA+EOT Attack}
\label{A: bpda}
\vspace{-10pt}
\begin{table}[!htbp]
\footnotesize
\caption{Clean (\%) and robust accuracy (\%) of defense methods against BPDA+EOT attack under $\ell_\infty (\epsilon = 8/255)$ threat models on \emph{CIFAR-10}. We report averaged results and standard deviations of DAD for 5 runs. We show the most successful defense in \textbf{bold}.}
\vspace{3pt}
\centering
\begin{tabular}{ccccccc}
\toprule
Category & Model & Method & Clean & Robust & Average \\
\midrule
\multirow{4}{*}{Adversarial Training} & \multirow{2}{*}{RN-18} & \citet{pgd} & 87.30 & 45.80 & 66.55 \\ 
& & \citet{DBLP:conf/icml/ZhangYJXGJ19} & 84.90 & 45.80 & 65.35 \\
& \multirow{2}{*}{WRN-28-10} & \citet{DBLP:conf/nips/CarmonRSDL19} & 89.67 & 63.10 & 76.39 \\
& & \citet{DBLP:journals/corr/abs-2010-03593} & 89.48 & 64.08 & 77.28 \\
\midrule
\multirow{7}{*}{Adversarial Purification} & RN-18 & \citet{DBLP:conf/icml/YangZXK19} & 94.80 & 40.80 & 67.80 \\
& RN-62 & \citet{pixel_defend} & \textbf{95.00} & 9.00 & 52.00 \\
& \multirow{4}{*}{WRN-28-10} & \citet{DBLP:conf/iclr/HillMZ21} & 84.12 & 54.90 & 69.51 \\
& & \citet{DBLP:conf/icml/YoonHL21} & 86.14 & 70.01 & 78.08 \\
& & \citet{DBLP:journals/corr/abs-2205-14969} & 93.50 & 79.83 & 86.67 \\
& & \citet{diffpure} & 89.02 & 81.40 & 85.21 \\
& & \citet{lee} & 90.16 & \bf{88.40} & 89.28 \\
\midrule
\cellcolor{lg}{Ours} & \cellcolor{lg}{WRN-28-10} & \cellcolor{lg}{DAD}  & \cellcolor{lg}{94.16 $\pm$ 0.08} & \cellcolor{lg}{87.13 $\pm$ 1.19} & \cellcolor{lg}{\textbf{90.65}} \\
\bottomrule
\end{tabular}
\label{tab: cifar-bpda}
\end{table}

\subsection{Ablation Study on Mixed Data Batches}
\vspace{-15pt}
\label{A: mixed data batch}
\begin{table}[!htbp]
\footnotesize
\caption{Mixed accuracy (\%) of defense methods against adaptive white-box attacks $\ell_\infty (\epsilon = 8/255)$ on \emph{CIFAR-10} under different proportions of AEs. The target model is WRN-28-10. We report averaged results and standard deviations of 5 runs. We show the most successful defense in \textbf{bold}.}
\vspace{3pt}
\centering
\begin{tabular}{lcccccccccc}
\toprule
\multirow{2}{*}{Method} & \multicolumn{10}{c}{Proportion of AEs in Each Batch (\%)}\\
& 10 & 20 & 30 & 40 & 50 & 60 & 70 & 80 & 90 & 100 \\
\midrule
\citet{DBLP:journals/corr/abs-2103-01946} & 85.10 & 82.68 & 80.27 & 77.86 & 75.45 & 73.03 & 70.62 & 68.21 & 65.79 & 63.38\\ 
\citet{DBLP:conf/eccv/AugustinM020} & 85.96 & 83.38 & 80.81 & 78.23 & 75.65 & 73.07 & 70.49 & 67.92 & 65.34 & 62.76 \\
\citet{sehwag2022robust} & 85.86 & 83.10 & 80.35 & 77.59 & 74.83 & 72.07 & 69.31 & 66.56 & 63.80 & 61.04 \\
\citet{DBLP:conf/icml/YoonHL21} & 81.80 & 76.83 & 71.87 & 66.90 & 61.94 & 56.97 & 52.01 & 47.04 & 42.08 & 37.11 \\
\citet{diffpure} & 85.75 & 81.42 & 77.10 & 72.78 & 68.46 & 64.13 & 59.81 & 55.49 & 55.16 & 46.84 \\
\citet{lee} & 86.73 & 83.29 & 79.86 & 76.42 & 72.99 & 69.56 & 66.12 & 62.69 & 59.25 & 55.82 \\
\midrule
 \cellcolor{lg}{} & \cellcolor{lg}{\bf{91.22}} & \cellcolor{lg}{\bf{87.15}} & \cellcolor{lg}{\bf{81.77}} & \cellcolor{lg}{\bf{79.94}} & \cellcolor{lg}{\bf{77.78}} & \cellcolor{lg}{\bf{76.14}} & \cellcolor{lg}{\textbf{74.22}} & \cellcolor{lg}{\textbf{72.37}} & \cellcolor{lg}{\textbf{69.56}} & \cellcolor{lg}{\textbf{67.53}}\\
 \multirow{-2}{*}{\cellcolor{lg}{Ours}} & \cellcolor{lg}{\bf{$\pm$ 0.47}} & \cellcolor{lg}{\bf{$\pm$ 0.58}} & \cellcolor{lg}{\bf{$\pm$ 0.66}} & \cellcolor{lg}{\bf{$\pm$ 0.66}} & \cellcolor{lg}{\bf{$\pm$ 0.51}}  & \cellcolor{lg}{\bf{$\pm$ 0.69}} & \cellcolor{lg}{\textbf{$\pm$ 0.53}} & \cellcolor{lg}{\textbf{$\pm$ 0.74}} & \cellcolor{lg}{\textbf{$\pm$ 0.83}} & \cellcolor{lg}{\textbf{$\pm$ 1.07}} \\
\bottomrule
\end{tabular}
\label{tab: mixed data batch}
\end{table}

\subsection{Ablation Study on Threshold Values of MMD-OPT}
\vspace{-15pt}
\label{A: threshold}
\begin{table}[!htbp]
\centering
\caption{Sensitivity of DAD to the threshold values of MMD-OPT on CIFAR-10. We report clean and robust accuracy (\%) against adaptive white-box attacks ($\epsilon = 8/255$). The classifier used is WRN-28-10.}
\vspace{3pt}
\begin{tabular}{cccccc}
\toprule
\multirow{2}{*}{Threshold Value} & \multirow{2}{*}{Clean} & \multicolumn{2}{c}{PGD+EOT} & \multicolumn{2}{c}{AutoAttack} \\ 
& & $\ell_\infty$ & $\ell_2$ & $\ell_\infty$ & $\ell_2$ \\
\midrule
0.05 & 94.16 & 66.98 & 73.40 & 72.21 & 85.96 \\
0.07 & 94.16 & 66.98 & 73.40 & 72.21 & 85.96 \\
0.10  & 94.16 & 66.98 & 73.40 & 72.21 & 85.96 \\
0.50  & 94.16 & 66.98 & 84.38 & 72.21 & 85.96 \\
0.70  & 94.16 & 66.98 & 84.38 & 72.21 & 85.96 \\
1.00  & 94.16 & 64.75 & 84.38 & 72.21 & 85.96 \\ 
\bottomrule
\end{tabular}
\end{table}

\newpage
\subsection{Ablation Study on the Regularization Term $\alpha$ in \eqref{eq: loss}}
\label{A: ablation on alpha}
\vspace{-15pt}
\begin{table}[!htbp]
\centering
\caption{Sensitivity of DAD to the regularization term $\alpha$ on CIFAR-10. We report clean and robust accuracy (\%) against adaptive white-box PGD+EOT ($\epsilon = 8/255$). The classifier used is WRN-28-10.}
\vspace{3pt}
\begin{tabular}{ccc}
\toprule
$\alpha$ & Clean & PGD+EOT \\
\midrule
0.01 & 94.16 & 67.53 \\
0.05 & 94.16 & 50.70 \\
0.10  & 94.16 & 45.35 \\
\bottomrule
\end{tabular}
\end{table}

\subsection{Ablation Study on Injecting Gaussian Noise}
\vspace{-15pt}
\label{A: ablation on gaussian}
\begin{table}[!htbp]
\footnotesize
\caption{Robust accuracy (\%) of our method with and without injecting Guassian noise against adaptive white-box PGD+EOT $\ell_\infty (\epsilon = 8/255)$ and $\ell_2 (\epsilon = 0.5)$ on \emph{CIFAR-10}. We report averaged results and standard deviations of 5 runs. We show the most successful defense in \textbf{bold}. }
\vspace{3pt}
\centering
\begin{tabular}{cccc}
\toprule
Gaussian Noise & Model & PGD+EOT $(\ell_\infty)$ & PGD+EOT $(\ell_2)$ \\
\midrule
\ding{55} & \multirow{2}{*}{WRN-28-10} & 65.31 $\pm$ 0.67 & 81.04 $\pm$ 0.52\\
\ding{52} & & \textbf{67.53 $\pm$ 1.07} & \textbf{84.38 $\pm$ 0.81} \\
\bottomrule
\end{tabular}
\end{table}

\subsection{Ablation Study on the Two-pronged Process}
\label{A: ablation on two-pronged process}
\vspace{-10pt}
\begin{table}[!htbp]
\footnotesize
\caption{Clean and robust accuracy (\%) of our method with and without the two-pronged process against adaptive white-box PGD+EOT $\ell_\infty (\epsilon = 8/255)$  and $\ell_2 (\epsilon = 0.5)$ on \emph{CIFAR-10}. We report averaged results and standard deviations of 5 runs. We show the most successful defense in \textbf{bold}.}
\vspace{3pt}
\centering
\begin{tabular}{ccccc}
\toprule
Module & Model & Clean & PGD+EOT $(\ell_\infty)$ & PGD+EOT $(\ell_2)$ \\
\midrule
Denoiser only & \multirow{2}{*}{WRN-28-10} & 85.07 $\pm$ 0.16 & \bf{71.76 $\pm$ 0.65} & \bf{85.01 $\pm$ 0.50}\\
Denoiser + MMD-OPT & & \textbf{94.16 $\pm$ 0.08} & 67.53 $\pm$ 1.07 & 84.37 $\pm$ 0.81 \\
\bottomrule
\end{tabular}
\end{table}

\subsection{Compute Resources}
\label{A: compute resource}
\vspace{-10pt}
\begin{table}[!htbp]
    \caption{Training time (hours : minutes : seconds) and memory consumption (MB) for DAD on \emph{CIFAR-10} and \emph{ImageNet-1K} . This table reports the compute resources for \emph{the entire training process} of DAD  (i.e., optimizing MMD + training the denoiser).}
    \vspace{3pt}
    \label{tab: computation}
    \footnotesize
    \centering
    \begin{tabular}{lccccc}
        \toprule
        \multicolumn{1}{c}{Dataset} &
        \multicolumn{1}{c}{GPU} &
        \multicolumn{1}{c}{Batch Size} &
        \multicolumn{1}{c}{Classifier} & 
        \multicolumn{1}{c}{Training Time} &
        Memory \\
        \midrule
        \multirow{2}{*}{CIFAR-10} & \multirow{2}{*}{2 $\times$ NVIDIA A100} & \multirow{2}{*}{500} & RN-18 & 00:28:17 & 5927 \\
        & & & WRN-28-10 & 00:55:34 & 6276 \\
        \midrule
        ImageNet-1K & 4 $\times$ NVIDIA A100 & 128 & RN-50 & 09:52:50 & 97354 \\
        \bottomrule
\end{tabular}
\end{table}
\vspace{-10pt}
\begin{table}[!htbp]
    \caption{Inference time (hours : minutes : seconds) for DAD on \emph{CIFAR-10} and \emph{ImageNet-1K}. This table reports the comput resources for evaluating \emph{the entire test set} of \emph{CIFAR-10} (i.e., 10,000 images) and \emph{ImageNet-1K} (i.e., 50,000 images).}
    \vspace{3pt}
    \label{tab: inference time}
    \footnotesize
    \centering
    \begin{tabular}{lcccc}
        \toprule
        \multicolumn{1}{c}{Dataset} &
        \multicolumn{1}{c}{GPU} &
        \multicolumn{1}{c}{Batch Size} &
        \multicolumn{1}{c}{Classifier} & 
        \multicolumn{1}{c}{Inference Time} \\
        \midrule
        CIFAR-10 & 1 $\times$ NVIDIA A100 & 100 & WRN-28-10 & 00:00:32 \\
        \midrule
        ImageNet-1K & 2 $\times$ NVIDIA A100 & 100 & RN-50 & 00:03:08 \\
        \bottomrule
\end{tabular}
\end{table}

Table \ref{tab: computation} presents the compute resources for DAD, which include GPU configurations, batch size, classifier, training time, and memory usage for each dataset. For CIFAR-10, using 2 NVIDIA A100 GPUs with a batch size of 500, our method's training time is approximately 28 minutes with ResNet-18 and 55 minutes with WideResNet-28-10. The memory consumption is 5927 MB and 6276 MB, respectively. For ImageNet-1K, using 4 NVIDIA A100 GPUs with a batch size of 128, our method's training time is approximately 10 hours, with a memory consumption of 97354 MB. Compared to AT baseline methods, DAD offers better training efficiency (e.g., it can scale to large datasets like ImageNet-1K). This is mainly because we directly use the pre-trained classifier. Furthermore, training MMD is extremely fast (usually less than 1 minute on CIFAR-10) and we use a lightweight denoiser.

Table \ref{tab: inference time} presents the compute resources for evaluating DAD, which include GPU configurations, batch size, classifier and inference time for each dataset. For CIFAR-10, using 1 NVIDIA A100 GPU with a batch size of 100, our method's inference time is approximately 32 seconds over \emph{the entire test set} of CIFAR-10. 
For ImageNet-1K, using 2 NVIDIA A100 GPUs with a batch size of 100, our method's inference time is approximately 3 minutes over \emph{the entire test set} of ImageNet-1K. Although DAD requires training an extra denoiser and MMD-OPT, it significantly outperforms AP baselines in inference speed. Furthermore, relying on a pre-trained generative model is not always feasible, as training such models at scale can be highly resource-intensive. 
Therefore, considering considering the trade-off between computational cost and the performance of DAD, we believe that training an additional detector and denoiser is feasible and worthwhile.

\subsection{Combination with Adversarial Training}
\vspace{-15pt}
\label{A: AT}
\begin{table}[!htbp]
\centering
\small
\caption{Mixed accuracy (\%) our method with and without different adversarial training methods under different proportions of AEs in each batch and different batch sizes against PGD+EOT $\ell_\infty (\epsilon = 8/255)$. We show the most successful defense in \textbf{bold}.}
\vspace{3pt}
\begin{tabular}{lcccccccccc}
\toprule
\multirow{2}{*}{Method} & \multicolumn{10}{c}{Proportion of AEs in Each Batch (\%)}\\
& 10 & 20 & 30 & 40 & 50 & 60 & 70 & 80 & 90 & 100 \\
\midrule
\multicolumn{11}{c}{Batch Size = 25}\\
\midrule
Vanilla AT \citep{pgd} & 71.89 & 68.80 & 65.72 & 62.63 & 59.55 & 56.46 & 53.38 & 50.29 & 47.21 & 44.12 \\
\cellcolor{lg}{+ Ours} & \cellcolor{lg}{\textbf{72.81}} & \cellcolor{lg}{\textbf{69.08}} & \cellcolor{lg}{\textbf{66.32}} & \cellcolor{lg}{\textbf{62.67}} & \cellcolor{lg}{\textbf{60.69}} & \cellcolor{lg}{\textbf{56.98}} & \cellcolor{lg}{\textbf{54.60}} & \cellcolor{lg}{\textbf{50.82}} & \cellcolor{lg}{\textbf{48.87}} & \cellcolor{lg}{\textbf{45.75}} \\
\midrule
TRADES \citep{DBLP:conf/icml/ZhangYJXGJ19} & 70.29 & 67.70 & 65.11 & 62.52 & 59.92 & 57.33 & 54.74 & 52.15 & 49.56 & 46.97 \\
\cellcolor{lg}{+ Ours} & \cellcolor{lg}{\textbf{70.99}} & \cellcolor{lg}{\textbf{67.73}} & \cellcolor{lg}{\textbf{65.67}} & \cellcolor{lg}{\textbf{63.46}} & \cellcolor{lg}{\textbf{60.88}} & \cellcolor{lg}{\textbf{57.87}} & \cellcolor{lg}{\textbf{55.80}} & \cellcolor{lg}{\textbf{52.92}} & \cellcolor{lg}{\textbf{50.98}} & \cellcolor{lg}{\textbf{48.10}} \\
\midrule
MART \citep{DBLP:conf/iclr/0001ZY0MG20} & 68.63 & 66.25 & 63.86 & 61.47 & 59.08 & 56.70 & 54.31 & 51.92 & 49.54 & 47.15 \\
\cellcolor{lg}{+ Ours} & \cellcolor{lg}{\textbf{69.34}} & \cellcolor{lg}{\textbf{66.67}} & \cellcolor{lg}{\textbf{64.34}} & \cellcolor{lg}{\textbf{61.78}} & \cellcolor{lg}{\textbf{60.05}} & \cellcolor{lg}{\textbf{57.27}} & \cellcolor{lg}{\textbf{55.16}} & \cellcolor{lg}{\textbf{52.48}} & \cellcolor{lg}{\textbf{51.10}} & \cellcolor{lg}{\textbf{48.46}} \\
\midrule
TRADES-AWP \citep{wu2020adversarial} & 69.52 & 67.22 & 64.92 & 62.62 & 60.32 & 58.02 & 55.72 & 53.42 & 51.12 & 48.82 \\
\cellcolor{lg}{+ Ours} & \cellcolor{lg}{\textbf{70.22}} & \cellcolor{lg}{\textbf{66.98}} & \cellcolor{lg}{\textbf{65.56}} & \cellcolor{lg}{\textbf{62.46}} & \cellcolor{lg}{\textbf{61.12}} & \cellcolor{lg}{\textbf{58.18}} & \cellcolor{lg}{\textbf{57.03}} & \cellcolor{lg}{\textbf{54.65}} & \cellcolor{lg}{\textbf{53.40}} & \cellcolor{lg}{\textbf{49.99}} \\
\midrule
\multicolumn{11}{c}{Batch Size = 50}\\
\midrule
Vanilla AT \citep{pgd}  & 71.89 & 68.81 & 65.73 & 62.65 & 59.57 & 56.49 & 53.41 & 50.33 & 47.25 & 44.17 \\
\cellcolor{lg}{+ Ours} & \cellcolor{lg}{\textbf{72.84}} & \cellcolor{lg}{\textbf{69.57}} & \cellcolor{lg}{\textbf{66.55}} & \cellcolor{lg}{\textbf{62.93}} & \cellcolor{lg}{\textbf{60.74}} & \cellcolor{lg}{\textbf{57.12}} & \cellcolor{lg}{\textbf{53.88}} & \cellcolor{lg}{\textbf{51.21}} & \cellcolor{lg}{\textbf{48.82}} & \cellcolor{lg}{\textbf{46.91}} \\
\midrule
TRADES \citep{DBLP:conf/icml/ZhangYJXGJ19} & \textbf{70.28} & 67.69 & 65.09 & 62.50 & 59.90 & 57.30 & 54.71 & 52.11 & 49.52 & 46.92 \\
\cellcolor{lg}{+ Ours} & \cellcolor{lg}{70.17} & \cellcolor{lg}{\textbf{67.80}} & \cellcolor{lg}{\textbf{65.38}} & \cellcolor{lg}{\textbf{62.64}} & \cellcolor{lg}{\textbf{59.99}} & \cellcolor{lg}{\textbf{57.47}} & \cellcolor{lg}{\textbf{55.45}} & \cellcolor{lg}{\textbf{52.90}} & \cellcolor{lg}{\textbf{51.10}} & \cellcolor{lg}{\textbf{49.27}} \\
\midrule
MART \citep{DBLP:conf/iclr/0001ZY0MG20} & 68.63 & 66.24 & 63.86 & 61.47 & 59.08 & 56.69 & 54.30 & 51.92 & 49.53 & 47.14 \\
\cellcolor{lg}{+ Ours} & \cellcolor{lg}{\textbf{68.75}} & \cellcolor{lg}{\textbf{66.34}} & \cellcolor{lg}{\textbf{64.54}} & \cellcolor{lg}{\textbf{62.60}} & \cellcolor{lg}{\textbf{59.41}} & \cellcolor{lg}{\textbf{58.07}} & \cellcolor{lg}{\textbf{56.78}} & \cellcolor{lg}{\textbf{55.14}} & \cellcolor{lg}{\textbf{53.51}} & \cellcolor{lg}{\textbf{51.35}} \\
\midrule
TRADES-AWP \citep{wu2020adversarial} & \textbf{69.52} & \textbf{67.23} & 64.93 & 62.64 & 60.34 & 58.04 & 55.75 & 53.45 & 51.16 & 48.86 \\
\cellcolor{lg}{+ Ours} & \cellcolor{lg}{69.44} & \cellcolor{lg}{67.10} & \cellcolor{lg}{\textbf{65.18}} & \cellcolor{lg}{\textbf{62.80}} & \cellcolor{lg}{\textbf{60.49}} & \cellcolor{lg}{\textbf{57.98}} & \cellcolor{lg}\textbf{{56.18}} & \cellcolor{lg}{\textbf{54.25}} & \cellcolor{lg}{\textbf{52.34}} & \cellcolor{lg}{\textbf{51.10}} \\
\midrule
\multicolumn{11}{c}{Batch Size = 100}\\
\midrule
Vanilla AT \citep{pgd} & 71.88 & 68.79 & 65.71 & 62.62 & 59.53 & 56.44 & 53.35 & 50.27 & 47.18 & 44.09 \\
\cellcolor{lg}{+ Ours} & \cellcolor{lg}{\textbf{72.23}} & \cellcolor{lg}{\textbf{69.14}} & \cellcolor{lg}{\textbf{65.82}} & \cellcolor{lg}{\textbf{63.74}} & \cellcolor{lg}{\textbf{59.87}} & \cellcolor{lg}{\textbf{58.01}} & \cellcolor{lg}{\textbf{56.08}} & \cellcolor{lg}{\textbf{53.36}} & \cellcolor{lg}{\textbf{51.50}} & \cellcolor{lg}{\textbf{49.20}} \\
\midrule
TRADES \citep{DBLP:conf/icml/ZhangYJXGJ19} & 70.28 & 67.69 & 65.09 & 62.49 & 59.89 & 57.30 & 54.70 & 52.10 & 49.51 & 46.91 \\
\cellcolor{lg}{+ Ours} & \cellcolor{lg}{\textbf{70.36}} & \cellcolor{lg}{\textbf{67.79}} & \cellcolor{lg}{\textbf{65.22}} & \cellcolor{lg}{\textbf{63.58}} & \cellcolor{lg}{\textbf{60.62}} & \cellcolor{lg}{\textbf{58.31}} & \cellcolor{lg}{\textbf{57.04}} & \cellcolor{lg}{\textbf{55.51}} & \cellcolor{lg}{\textbf{53.38}} & \cellcolor{lg}{\textbf{51.58}} \\
\midrule
MART \citep{DBLP:conf/iclr/0001ZY0MG20} & 68.64 & 66.26 & 63.87 & 61.49 & 59.11 & 56.73 & 54.35 & 51.96 & 49.58 & 47.20 \\
\cellcolor{lg}{+ Ours} & \cellcolor{lg}{\textbf{68.75}} & \cellcolor{lg}{\textbf{66.35}} & \cellcolor{lg}{\textbf{64.50}} & \cellcolor{lg}{\textbf{62.61}} & \cellcolor{lg}{\textbf{59.50}} & \cellcolor{lg}{\textbf{58.02}} & \cellcolor{lg}{\textbf{56.77}} & \cellcolor{lg}{\textbf{54.97}} & \cellcolor{lg}{\textbf{53.51}} & \cellcolor{lg}{\textbf{51.40}} \\
\midrule
TRADES-AWP \citep{wu2020adversarial} & 69.52 & 67.22 & 64.92 & 62.62 & 60.31 & 58.01 & 55.71 & 53.41 & 51.11 & 48.81 \\
\cellcolor{lg}{+ Ours} & \cellcolor{lg}{\textbf{69.64}} & \cellcolor{lg}{\textbf{67.49}} & \cellcolor{lg}{\textbf{64.97}} & \cellcolor{lg}{\textbf{63.61}} & \cellcolor{lg}{\textbf{61.10}} & \cellcolor{lg}{\textbf{59.30}} & \cellcolor{lg}{\textbf{57.87}} & \cellcolor{lg}{\textbf{56.40}} & \cellcolor{lg}{\textbf{54.24}} & \cellcolor{lg}{\textbf{52.06}} \\
\midrule
\multicolumn{11}{c}{Batch Size = 150}\\
\midrule
Vanilla AT \citep{pgd} & 71.89 & 68.80 & 65.71 & 62.62 & 59.54 & 56.45 & 53.36 & 50.27 & 47.18 & 44.09 \\
\cellcolor{lg}{+ Ours} & \cellcolor{lg}{\textbf{72.06}} & \cellcolor{lg}{\textbf{69.17}} & \cellcolor{lg}{\textbf{65.96}} & \cellcolor{lg}{\textbf{62.86}} & \cellcolor{lg}{\textbf{59.68}} & \cellcolor{lg}{\textbf{58.92}} & \cellcolor{lg}{\textbf{55.09}} & \cellcolor{lg}{\textbf{52.39}} & \cellcolor{lg}{\textbf{49.90}} & \cellcolor{lg}{\textbf{47.97}} \\
\midrule
TRADES \citep{DBLP:conf/icml/ZhangYJXGJ19} & 70.27 & 67.67 & 65.08 & 62.49 & 59.89 & 57.30 & 54.71 & 52.12 & 49.52 & 46.93 \\
\cellcolor{lg}{+ Ours} & \cellcolor{lg}{\textbf{70.59}} & \cellcolor{lg}{\textbf{67.87}} & \cellcolor{lg}{\textbf{65.43}} & \cellcolor{lg}{\textbf{63.35}} & \cellcolor{lg}{\textbf{61.58}} & \cellcolor{lg}{\textbf{60.05}} & \cellcolor{lg}{\textbf{58.23}} & \cellcolor{lg}{\textbf{56.13}} & \cellcolor{lg}{\textbf{53.68}} & \cellcolor{lg}{\textbf{51.68}} \\
\midrule
MART \citep{DBLP:conf/iclr/0001ZY0MG20} & 68.64 & 66.26 & 63.87 & 61.48 & 59.09 & 56.71 & 54.32 & 51.93 & 49.55 & 47.16 \\
\cellcolor{lg}{+ Ours} & \cellcolor{lg}{\textbf{68.56}} & \cellcolor{lg}{\textbf{66.48}} & \cellcolor{lg}{\textbf{64.07}} & \cellcolor{lg}{\textbf{62.06}} & \cellcolor{lg}{\textbf{60.99}} & \cellcolor{lg}{\textbf{59.37}} & \cellcolor{lg}{\textbf{57.47}} & \cellcolor{lg}{\textbf{55.69}} & \cellcolor{lg}{\textbf{53.55}} & \cellcolor{lg}{\textbf{51.68}} \\
\midrule
TRADES-AWP \citep{wu2020adversarial} & 69.53 & 67.23 & 64.93 & 62.63 & 60.34 & 58.04 & 55.74 & 53.44 & 51.14 & 48.84 \\
\cellcolor{lg}{+ Ours} & \cellcolor{lg}{\textbf{69.70}} & \cellcolor{lg}{\textbf{67.43}} & \cellcolor{lg}{\textbf{64.98}} & \cellcolor{lg}{\textbf{63.53}} & \cellcolor{lg}{\textbf{62.26}} & \cellcolor{lg}{\textbf{60.05}} & \cellcolor{lg}{\textbf{57.99}} & \cellcolor{lg}{\textbf{56.14}} & \cellcolor{lg}{\textbf{54.25}} & \cellcolor{lg}{\textbf{52.18}} \\
\bottomrule
\end{tabular}
\label{tab: AT}
\end{table}

\newpage

\section{Detailed Related Work}
\label{Appendix: related work}
\textbf{Adversarial attacks.} The discovery of \emph{adversarial examples} (AEs) has raised a security concern for AI development in recent decades \citep{DBLP:journals/corr/SzegedyZSBEGF13, DBLP:journals/corr/GoodfellowSS14}. 
AEs are often crafted by adding imperceptible noise to clean images, which can easily mislead a classifier to make wrong predictions. 
The algorithms that generate AEs are called \emph{adversarial attacks}.
For example, the \emph{Fast Gradient Sign Method} (FGSM) perturbs clean data in the direction of the loss gradient \citep{DBLP:journals/corr/GoodfellowSS14}.
Expanding on FGSM, the \emph{Basic Iterative Method} (BIM) \citep{bim} iteratively applies small noises to the clean data in the direction of the gradient of the loss function, updating the input at each step to create more effective AEs than single-step methods such as FGSM. 
\citet{pgd} propose the \emph{Projected Gradient Descent} (PGD), which further improves the iterative approach of BIM by adding random initialization to the input data before applying iterative gradient-based perturbations. 
Beyond non-targeted attacks, the \emph{Carlini \& Wagner} attack (C\&W) specifically directs data towards a chosen target label, which crafts AEs by optimizing a specially designed objective function \citep{cw}. \emph{AutoAttack} (AA) \citep{aa} is an ensemble of multiple adversarial attacks, which combines three non-target white-box attacks \citep{fab} and one targeted black-box attack \citep{square}, which makes AA a benchmark standard for evaluating adversarial robustness. 
However, the computational complexity of AA is relatively high. 
\citet{mma} propose the \emph{Minimum-margin attack} (MMA), which can be used as a faster alternative to AA. Beyond computing exact gradients, \citet{eot} propose \emph{Expectation over Transformation} (EOT) to correctly compute the gradient for defenses that apply randomized transformations to the input. \citet{bpda} propose the \emph{Backward Pass Differentiable Approximation} (BPDA), which approximates the gradient with an identity mapping to effectively break the defenses that leverage obfuscated gradients. According to \citet{lee}, PGD+EOT is currently the best attack for denoiser-based defense methods.

\textbf{Adversarial detection.} The most lightweight method to defend against adversarial attacks is to detect and discard AEs in the input data.
Previous studies have largely utilized statistics on hidden-layer features of deep neural networks (DNNs) to filter out AEs from test data. 
For example,  \citet{DBLP:conf/iclr/Ma0WEWSSHB18} utilize the \emph{local intrinsic dimensionality} (LID) of DNN features as detection characteristics. \citet{DBLP:conf/nips/LeeLLS18} implement a Mahalanobis distance-based score for identifying AEs.  \citet{DBLP:conf/icml/RaghuramCJB21} develop a meta-algorithm that extracts intermediate layer representations of DNNs, offering configurable components for detection. 
\citet{DBLP:conf/cvpr/DengYX0021} leverage a Bayesian neural network to detect AEs, which is trained by adding uniform noises to samples. Another prevalent strategy involves equipping classifiers with a rejection option. 
For example, \citet{DBLP:conf/icml/Stutz0S20} introduce a confidence-calibrated adversarial training framework, which guides the model to make low-confidence predictions on AEs, thereby determining which samples to reject. 
Similarly, \citet{DBLP:conf/cvpr/PangZHD000L22} integrate confidence measures with a newly proposed R-Con metric to effectively separate AEs out. 
However, these methods, train a detector for specific classifiers or attacks, tend to neglect the modeling of data distribution, which can limit their effectiveness against unknown attacks.
Recently, \emph{statistical adversarial data detection} (SADD) has delivered increasing insight. 
For example, \citet{sammd} demonstrate that \emph{maximum mean discrepancy} (MMD) is aware of adversarial attacks and leverage the distributional discrepancy between AEs and CEs to filter out AEs, which has been shown effective against unseen attacks. 
Based on this, \citet{epsad} further propose a new statistic called \emph{expected perturbation score} (EPS) that measures the expected score of a sample after multiple perturbations. 
Then, an EPS-based MMD is proposed to measure the distributional discrepancy between CEs and AEs. 
Despite the effectiveness of SADD, an undeniable problem of SADD-based methods is that they will discard data batches that contain AEs.
To solve this problem, in this paper, we propose a new defense method that does not discard any data, while also inherits the capabilities of SADD-based detection methods.

\textbf{Adversarial training.} Another prominent defensive framework is \emph{adversarial training} (AT).
Vanilla AT \citep{pgd} directly generates and incorporates AEs during the training process, forcing the model to learn the underlying distributions of AEs. 
Besides vanilla AT, several modifications have been developed to enhance the effectiveness of AT. 
For instance, at the early stage of AT, \citet{DBLP:conf/iclr/SongHWH19} propose to treat adversarial attacks as a domain adaptation problem and enhance the generalization of AT by minimizing the distributional discrepancy. \citet{DBLP:conf/icml/ZhangYJXGJ19} propose optimizing a surrogate loss function based on theoretical bounds. 
Similarly, \citet{DBLP:conf/iclr/0001ZY0MG20} explore how misclassified examples influence a model's robustness, leading to an improved adversarial risk through regularization. 
From the perspective of reweighting, \citet{DingSLH20} propose to reweight adversarial data with instance-dependent perturbation bounds $\epsilon$ and \citet{DBLP:conf/iclr/ZhangZ00SK21} introduce a geometry-aware instance-reweighted AT (GAIRAT) framework, which differentiates weights based on the proximity of data points to the class boundary. 
\citet{DBLP:conf/nips/WangLHLGNZS21} build upon GAIRAT by leveraging probabilistic margins to reweight AEs due to their continuous nature and independence from specific perturbation paths.
\citet{zhou2023phase} propose a joint adversarial defense method that combines a phase-level adversarial training mechanism to enhance robustness against phase-based attacks with an amplitude-based preprocessing operation to mitigate perturbations in the amplitude domain.
More recently, \citet{zhang2024improving} propose to pixel-wisely reweight AEs by explicitly guiding them to focus on important pixel regions.
Other modifications include improving AT using data augmentation methods \citep{DBLP:conf/nips/GowalRWSCM21, DBLP:journals/corr/abs-2103-01946} and hyper-parameter selection methods \citep{DBLP:journals/corr/abs-2010-03593, DBLP:conf/iclr/PangYDSZ21}. Although AT achieves high robustness against particular attacks, it suffers from significant degradation in clean accuracy and high computational complexity \citep{DBLP:conf/iclr/WongRK20, DBLP:conf/iclr/Laidlaw0F21, DBLP:conf/iccv/PoursaeedJYBL21}. 
Different from the AT framework, our method does not train a robust classifier. 
Instead, by directly feeding detected CEs to a pre-trained classifier, our method can effectively maintain clean accuracy. 
By using a lightweight detector and denoiser model, our method can alleviate the computational complexity.

\textbf{Denoiser-based adversarial defense.}
Another well-known defense framework is denoiser-based adversarial defense, which often leverages generative models to shift AEs back to their clean counterparts before feeding them into a classifier. In most literature, it is called \emph{adversarial purification} (AP).
Previous methods mainly focus on exploring the use of more powerful generative models for AP.
For example, at the early stage of AP, \citet{magnet} propose a two-step process called \emph{MagNet}, which first discards detected AEs using a detector, then uses an autoencoder to purify the rest by guiding them toward the manifold of clean data.
After \emph{MagNet}, \citet{guided_denoiser} design a denoising UNet that can denoise AEs to their clean counterparts by reducing the distance between adversarial and clean data under high-level representations.
\citet{defense_gan} use a GAN trained on clean examples to project AEs onto the generator's manifold.
\citet{pixel_defend} find that AEs lie in low-probability regions of the image distribution and propose to maximize the probability of a given test example.  
\citet{self_supervised} focus on training a conditional GAN, which engages in a min-max game with a critic network, to differentiate between adversarial and clean data.
\citet{DBLP:conf/icml/YoonHL21} propose to use the denoising score-based model to purify adversarial examples. 
\citet{diffpure} propose to use diffusion models to remove adversarial noise by gradually adding Gaussian noise to AEs, and then wash out the noise by solving the reverse-time stochastic differential equation.
\citet{zhou2023eliminating} propose to utilize complementary masks to disrupt adversarial noise and employs guided denoising models to recover robust and predictive representations from the masked samples.
The success of recent AP methods often relies on the assumption that there will be a pre-trained generative model that can precisely estimate the probability density of the CEs \citep{DBLP:conf/icml/YoonHL21, diffpure}. 
However, even powerful generative models (e.g., diffusion models) may have an inaccurate density estimation, leading to unsatisfactory performance \citep{chen2023robust}. By contrast, instead of estimating probability densities, our method directly minimizes the distributional discrepancies between AEs and CEs, leveraging the fact that identifying distributional discrepancies is simpler and more feasible than estimating density. \citet{nayak2023data} propose to use MMD as a regularizer during the training of the denoiser. Different from their work, we use an optimized version of MMD (i.e., MMD-OPT), which is more sensitive to adversarial attacks. Furthermore, the MMD-OPT not only acts as a \emph{guiding signal} to minimize the distributional discrepancy between AEs and CEs, but also acts as a \emph{discriminator} to distinguish between CEs and AEs, which \emph{kills two birds with one stone}.

\section{Discussions on Batch-wise Detections}
\label{A: limitaiton}

\textbf{Benefits of using batch-wise statistical hypothesis test.} A main benefit of using a batch-wise statistical hypothesis test is that it can \emph{effectively control the false alarm rate}. For example, for DAD, we set the maximum false alarm rate to be 5\%. 
\citet{DBLP:conf/nips/FangL0D0022} theoretically prove that for instance-wise detection methods to work perfectly, there must be a gap in the support set between IID and \emph{out-of-distribution} (OOD) data. 
This theory also applies to adversarial problems, but such a support set does not exist in adversarial settings, making \emph{perfect instance-wise detection generally difficult}.

\textbf{Limitation and its solutions for user inference.} DAD leverages statistics based on distributional discrepancies (i.e., MMD-OPT), which requires the data to be processed in batches for adversarial detection. However, when the batch size is too small, the stability of DAD will be affected (see Figure \ref{fig: ablation}). To address this issue, for user inference, single samples provided by the user can be dynamically stored in a queue. Once the queue accumulates enough samples to form a batch, our method can then process the batch collectively using the proposed approach. A direct cost of this solution is the waiting time, as the system must accumulate enough samples (e.g., 50 samples) to form a batch before processing. However, in scenarios where data arrives quickly, the waiting time is typically very short, making this approach feasible for many real-time applications. For applications with stricter latency requirements, the batch size can be dynamically adjusted based on the incoming data rate to minimize waiting time. For instance, if the system detects a lower data arrival rate, it can process smaller batches to ensure timely responses. Overall, it is a trade-off problem: using our method for user inference can obtain high robustness, but the cost is to wait for batch processing. Based on the performance improvements our method obtains over the baseline methods, we believe the cost is feasible and acceptable.
Another possible solution is to find more robust statistics that can measure distributional discrepancies with fewer samples.
Recently, measuring the expected score of a sample after multiple perturbations has proven useful for this purpose \citep{epsad}. 
However, computing the expected score is time-consuming.
We emphasize that this paper primarily focuses on the relationship between distributional discrepancies and adversarial risk, aiming to inspire the design of a new defense method. 
We leave it as future work.

\textbf{Practicability beyond user inference.} On the other hand, our method is not necessarily used for user inference. Instead, our method is suitable for cleaning the data before fine-tuning the underlying model. In many domains, obtaining large quantities of high-quality data is challenging due to factors such as cost, privacy concerns, or the rarity of specific data. As a result, all possible samples with clean information are critical in these data-scarce domains. Then, a practical scenario is that there exists a pre-trained model on a large-scale dataset (e.g., a DNN trained on ImageNet-1K) and clients want to fine-tune the model to perform well on downstream tasks. If the data for downstream tasks contain AEs, our method can be applied to batch-wisely clean the data before fine-tuning the underlying model.

\end{document}